\author{John Hainline\quad Brendan Juba\quad Hai S.~Le\\
Washington University in St.~Louis\\
{\tt \{john.hainline,bjuba,hsle\}@wustl.edu}\and 
David Woodruff\\
Carnegie Mellon University\\
{\tt dwoodruf@cs.cmu.edu}}
\newtheorem{theorem}{Theorem}
\newtheorem{lemma}[theorem]{Lemma}
\newtheorem{definition}[theorem]{Definition}
\newtheorem{corollary}[theorem]{Corollary}
\def\FullBox{\hbox{\vrule width 8pt height 8pt depth 0pt}}
\newcommand{\qed}{\;\;\;\FullBox}
\newenvironment{proof}{\noindent{\bf Proof:~~}}{\qed}
\newenvironment{proof-of}[1]{\noindent{\bf Proof of {#1}:~~}}{\qed}
\newcommand{\bbE}{\mathbb{E}}
\newcommand{\bbR}{\mathbb{R}}
\newcommand{\defeq}{\overset{\mathrm{def}}{=}}
\newcommand{\vspan}{\mathrm{span}}
\title{Conditional Sparse $\ell_p$-norm Regression With Optimal Probability\thanks{B.~Juba and H.~S.~Le were supported by an AFOSR Young Investigator Award and NSF Award CCF-1718380. D. Woodruff would like to thank IBM Almaden, where part of this work was done, as well as support by the National Science Foundation under Grant No. CCF-1815840.}}
\begin{document}
\maketitle

\begin{abstract}
We consider the following {\em conditional linear regression} problem:
the task is to identify both (i) a $k$-DNF\footnote{%
$k$-Disjunctive Normal Form ($k$-DNF):  an OR of ANDs of at most $k$ {\em literals}, where a literal is 
either a Boolean attribute or the negation of a Boolean attribute.}  condition $c$ and (ii) a linear rule
$f$ such that the probability of $c$ is (approximately) at least some 
given bound $\mu$, and $f$ minimizes the $\ell_p$ loss of predicting the
target $z$ in the distribution of examples {\em conditioned on} $c$. Thus, the task is to
identify a portion of the distribution on which a linear rule can provide a
good fit. Algorithms for this task are useful in cases where simple, learnable rules only accurately model portions of the 
distribution. The prior state-of-the-art for such algorithms 
could only guarantee finding a condition of probability $\Omega(\mu/n^k)$ when a 
condition of probability $\mu$ exists, and achieved an $O(n^k)$-approximation to
the target loss, where $n$ is the number of Boolean attributes. Here, we give efficient algorithms for
solving this task with a condition $c$ that nearly matches the probability of the
ideal condition, while also improving the approximation to the target loss.
We also give an algorithm for finding a $k$-DNF {\em reference class} for
prediction at a given query point, that obtains a sparse regression fit that 
has loss within $O(n^k)$ of optimal among all sparse regression parameters and 
sufficiently large $k$-DNF reference classes containing the query point.
\end{abstract}

\section{Introduction}
In areas such as advertising, it is common to break a population into 
{\em segments}, on account of a belief that the population as a whole is
heterogeneous, and thus better modeled as separate subpopulations. A natural
question is how we can identify such subpopulations. A related
question arises in personalized medicine. Namely, we need existing cases to
apply data-driven methods, but how can we identify cases that should be grouped 
together in order to obtain more accurate models? 

In this work, we consider a problem of this sort recently formalized by Juba~\citeyear
{juba17}: we are given a data set where each example has both a real-valued vector together with a Boolean-valued vector associated with it. Let $\vec{Y}$ denote the real-valued vector for some example, and let $\vec{X}$ denote the vector of Boolean attributes. We are also given some target prediction $Z$ of interest, and we wish to identify a $k$-DNF $c$ such that $c(\vec{X})$ specifies a subpopulation on which we can find a linear predictor for $Z$, $\langle \vec{a},\vec{Y}\rangle$, that achieves small loss. 
In particular, it may be that neither $\vec{a}$, nor any other linear predictor 
achieves small loss over the entire data distribution, and we simply wish to 
find the subset of the distribution, determined by our $k$-DNF $c$ evaluated on the Boolean attributes of our data, on which accurate regression is possible. We refer to this subset of the distribution as a "segment".

Juba proposed two algorithms for different families of loss functions, one for 
the sup norm (that only applies to $O(1)$-sparse regression) and one for the 
usual $\ell_p$ norms. But, the algorithm for the $\ell_p$ norm has a major 
weakness, namely that it can only promise to identify a condition $c'(\vec{X})$
that has a probability polynomially smaller than optimal. This is especially problematic if the segment had relatively small 
probability to begin with: it may be that there is no such condition with enough
examples in the data for adequate generalization. By contrast, the algorithm for
the sup norm recovers a condition $c'$ with probability essentially the same as 
the optimal condition, but of course the sup norm is an undesirable norm to use 
for regression, as it is maximally sensitive to outliers. In general,
$\ell_p$ regression is more tolerant to outliers as $p$ decreases, and
the sup norm is roughly ``$p=\infty$.''

In this work, in Theorem~\ref{mainthm}, we show how to give an algorithm for $O(1)$-sparse $\ell_p$ 
regression with a two-fold improvement over the previous
$\ell_p$ regression algorithm (mentioned above) when the coefficient vector is sparse:
\begin{compactenum}
\item The new algorithm recovers a condition $c'$ with probability
essentially matching the optimal condition.
\item We also obtain a smaller blow-up of the loss relative to the optimal 
regression parameters $\vec{a}$ and optimal condition $c$, reducing the degree of this polynomial factor.
\end{compactenum}
More concretely, Juba's previous algorithm identifies $k$-DNF
conditions, as does ours. This algorithm only identified a 
condition of probability $\Omega(\mu/ n^k)$ when a condition with 
probability $\mu$ exists, and only achieved loss
bounded by $O(\epsilon n^k)$ when it is possible to achieve loss $\epsilon$.
We improve the probability of the condition to $(1-\eta)\mu$ for any desired
$\eta>0$, and also reduce the bound on the loss to $\tilde{O}(\epsilon n^{k/2})$,
or further to $\tilde{O}(\epsilon g\log\log(n^k))$ if the condition has only $g$ ``terms'' (ANDs of literals), in Corollary~\ref{slpr-cor}.
This latter algorithm furthermore features a smaller sample complexity, that
only depends logarithmically on the number $n$ of Boolean attributes.
We include a synthetic data experiment demonstrating that the latter algorithm
can successfully recover more terms of a planted solution than Juba's sup norm algorithm.

We also give an algorithm for the closely related {\em reference class 
regression} problem in Theorem~\ref{rcreg-thm}. In this problem, we are given a query point $\vec{x}^*$ 
(along with values for the real attributes $\vec{y}^*$), and we wish to estimate 
the corresponding $z^*$. In order to compute this estimate for a single point, 
we find some $k$-DNF $c'$ such that $c'(\vec{x}^*)=1$ so $\vec{x}^*$ is in the 
support of the distribution conditioned on $c'(\vec{X})$, and such that there 
are (sparse) regression parameters $\vec{a}'$ for $c'$ that (nearly) match the 
optimal loss for sparse regression under any condition $c$ with 
$c(\vec{x}^*)=1$. In this case, the condition $c'$ is a {\em reference class} 
containing $\vec{x}^*$, obtaining the tightest possible fit among $k$-DNF 
classes containing $\vec{x}^*$. We think of this as describing a collection of 
``similar cases'' for use in estimating the target value $z^*$ from the 
attributes $\vec{y}^*$. For this problem, we guarantee that we find a condition 
$c'$ and sparse regression parameters $\vec{a}'$ that achieve loss within 
$O(n^k)$ of optimal.

Our algorithms are based on sparsifiers for linear systems, for example as
obtained for the $\ell_2$ norm by Batson et al.~\citeyear{bss12}, and as obtained for 
non-Euclidean $\ell_p$ norms by Cohen and Peng~\citeyear{cp15} based on Lewis weights~\citep
{lewis78}. The strategy is similar to Juba's sup norm algorithm: there, by
enumerating the vertices of the polytopes obtained on various subsets of
the data, he is able to obtain a list of all possible candidates for the
optimal estimates of the sup-norm regression coefficients. Here, we can 
similarly obtain such a list of estimates for the $\ell_p$ norms by enumerating
the possible sparsified linear systems. Sparsifiers are often used to 
accelerate algorithms, e.g., to run in time in terms of the number of
nonzero entries rather than the size of the overall input, though here we use that the size of a sparsified representation is small in a rather different way. Namely, the small representation allows for a feasible enumeration of possible candidates for the regression coefficients. 

We note that this means that we also obtain an algorithm
for $O(1)$-sparse $\ell_p$-norm regression in the {\em list-learning}
model~\citep{bbv08,csv17}. Although Charikar et al.~in
particular are able to solve a large family of problems in the 
list-learning model, they observe that their technique can only obtain 
trivial estimates for regression problems. We show that once we have such a 
list, algorithms for the {\em conditional distribution search} 
problem~\citep{juba16,zmj17,jlm18} can be generically used to extract a 
description of a good event $c$ for the given candidate
linear rule. Similar algorithms can also then be used to find a good
reference class. 

\subsection{Relationship to other work}
The conditional linear regression problem is most closely related to the problem of prediction with a ``reject'' option. In this model, a prediction algorithm has the option to abstain from making a prediction. In particular, El-Yaniv and Weiner~\citeyear{eyw12} considered linear regression in such a model. However, like most of the work in this area, their approach is based on scoring the ``confidence'' of a prediction function, and only making a prediction when the confidence is sufficiently high. This does not necessarily yield a nice description of the region in which the predictor will actually make predictions. On the other hand, Cortes et al.~\citeyear{cdsm16} consider an alternative variant of learning with rejection that does obtain such nice descriptions; but, in their model, they assume that abstention comes at a fixed, known cost, and they simply optimize the overall loss when their space of possible prediction values includes this fixed-cost ``abstain'' option. By contrast, we can explicitly consider various rates of coverage or loss. 

Our work is also related to the field
of {\em robust statistics}~\citep{huber81,rl87}: in this area, one seeks to
mitigate the effect of outliers. That is, roughly, we suppose that a small
fraction of the data has been corrupted, and we wish for our models and
inferences to be relatively unaffected by this corrupted data. The difference
between robust statistics and the conditional regression problem is that in
conditional regression we are willing to ignore {\em most} of the data. We only
wish to find a small fraction, described by some rule, on which we can obtain a
good regression fit.

Another work that similarly focuses on learning for small subsets of the data is
the the work by Charikar et al.~\citeyear{csv17} in the list-learning model of Balcan et al.~\citeyear{bbv08}. In that work, one seeks to find a
setting of some parameters that (nearly) minimizes a given loss function on an
unknown small subset of the data. Since it is generally impossible to
produce a single set of parameters to solve this task -- many different
small subsets could have different choices of optimal parameters -- the objective
is to produce a small list of parameters that contains a near-optimal
solution for the unknown subset somewhere in it. As we mentioned above, our
approach actually gives a list-learning algorithm for $O(1)$-sparse $\ell_p$
regression; but, we show furthermore how to identify a conditional distribution
such that some regression fit in the list obtains a good fit, thus also solving
the conditional linear regression task. Indeed, in general, we find that
algorithms for solving the list-learning task for regression yield algorithms for
conditional linear regression. 
The actual technique of Charikar et al.~does not solve our problem since 
their approximation guarantee essentially 
always admits the zero vector as a valid approximation. 
Thus, they 
also do not obtain an algorithm for list-learning (sparse) regression either.

Another task similar to list-learning of linear regression is the problem of 
finding dense linear relationships, as solved by RANSAC~\citep{fb81}. But, these
techniques only work in constant dimension. By contrast, although we are seeking
sparse regression rules, this is a sparse fit in high dimension. As with 
list-learning, in contrast to the conditional regression problem, such algorithms do not provide a description of the points fitting the dense linear relationship. 

Finally, yet another task similar to list-learning for regression is fitting
linear mixed models~\citep{mccs01,jiang07}. In this approach, one seeks to
explain all (or almost all) of the data as a mixture of several linear rules.
The guarantee here is incomparable to ours: in contrast to list-learning, the
linear mixed model simply needs a list of linear rules that accounts for nearly
all of the data; it does not need to find a list that accounts for all possible
sufficiently large subsets of the data. So, there is no guarantee that any of
the mixture components represent an approximation to the regression parameters
corresponding to an event of interest. On the other hand, in linear mixed 
models, one does not need to give any description at all of which points should
lie in which of the mixture components. In applications, one usually assigns
points to the linear rule that gives it the smallest residual, but
this may be less useful for predicting the values for new points.

\section{Preliminaries}

We now formally define the problems we consider in this work, and recall the
relevant background.

\subsection{The conditional regression and distribution search tasks}\label{problems}

Formally, we focus on the following task:

\begin{definition}[Conditional $\ell_p$-norm Regression]
{\em Conditional $\ell_p$-norm linear regression} is the following task. Suppose
we are given access to i.i.d.~examples drawn from a joint distribution over
$(\vec{X},\vec{Y},Z)\in\{0,1\}^n\times\{\vec{y}\in\bbR^d:\|y\|_2\leq b\}\times
[-b,b]$ such that for some $k$-DNF $c^*$ and some coefficient vector $\vec{a}^*
\in\bbR^d$ with $\|\vec{a}^*\|_2\leq b$, $\bbE[|\langle \vec{a}^*,\vec{Y}\rangle
-Z|^p|c^*(\vec{X})]\leq\epsilon$ and $\Pr[c^*(\vec{X})]\geq (1+\eta)\mu$ for 
some given $b>0$, $\epsilon>0$, $\eta>0$, and $\mu>0$. Then we wish to find 
$\hat{c}$ and $\hat{\vec{a}}$
such that $\bbE[|\langle\hat{\vec{a}},\vec{Y}\rangle-Z|^p|\hat{c}(\vec{X})]^{1/p}\leq
\alpha\epsilon$ and $\Pr[\hat{c}(\vec{X})]\geq (1-\eta)\mu$ for an
{\em approximation factor} function $\alpha$.

If $\vec{a}^*$ has at most $s$ nonzero components, and we require $\hat{\vec{a}}$ likewise
has at most $s$ nonzero components, then this is the {\em conditional $s$-sparse
$\ell_p$-norm regression} task.
\end{definition}

As stressed by Juba~\citeyear{juba17}, the restriction to $k$-DNF conditions is not 
arbitrary. If we could identify conditions that capture arbitrary conjunctions,
even for one-dimensional regression, this would yield PAC-learning algorithms
for general DNFs. In addition to this being an unexpected breakthrough, recent 
work by Daniely and Shalev-Shwartz~\citeyear{dss16} shows that such algorithms would imply new algorithms for
random $k$-SAT, falsifying a slight strengthening of Feige's hypothesis~\citep
{feige02}. We thus regard it as unlikely that any algorithm can hope to find
conditions of this kind. Of the classes of Boolean functions that do not contain
arbitrary conjunctions, $k$-DNFs are the most natural large class, and
hence are the focus for this model.

The difficulty of the problem lies in the fact that initially we are given neither the condition nor the linear predictor. Naturally, if we are told what the relevant subset is, we can just use standard methods for linear regression to obtain the linear predictor; conversely, if we are given the linear predictor, then we can use algorithms for the {\em conditional distribution search (learning abduction) task} (introduced by Juba~\citeyear{juba16}, recalled below) to identify a condition on which the linear predictor has small error. Our final algorithm actually uses this connection, by considering a list of candidates for the linear predictors to use for labeling the data, and choosing the linear predictor that yields a condition that selects a large subset. 
In particular, we will use algorithms for the following {\em weighted}
variant of the {\em conditional distribution search} task:

\begin{definition}[Conditional Distribution Search]
{\em Weighted conditional distribution search} is the following problem.
Suppose we are given access to examples drawn i.i.d.~from a distribution 
over $(\vec{X},W)\in\{0,1\}^n\times [0,b]$ such that there exists a
$k$-DNF condition $c^*$ with $\Pr[c^*(\vec{X})]\geq (1+\eta)\mu$
and $\bbE[W|c^*(\vec{X})]\leq\epsilon$ for some given parameters $\eta,\mu,b,
\epsilon>0$. Then, find a $k$-DNF $\hat{c}$ such that 
$\Pr[\hat{c}(\vec{X})]\geq(1-\eta)\mu$ and 
$\bbE[W|\hat{c}(\vec{X})]<\alpha\cdot\epsilon$ for some {\em approximation
factor} function $\alpha$ (or INFEASIBLE if no such $c^*$ exists).
\end{definition}

This task is closely related to {\em agnostic} conditional distribution search, 
which is the special case where the weights only take values $0$ or $1$.
The current state of the art for agnostic conditional distribution search is an
algorithm given by Zhang et al.~\citeyear{zmj17}, achieving an 
$\tilde{O}(\sqrt{n^k})$-approximation to the optimal error. That 
work built on an earlier algorithm 
due to Peleg~\citeyear{peleg07}. Peleg's work already showed how to extend his original
algorithm to a weighted variant of the problem, and we observe that an
analogous modification of the algorithm used by Zhang et al.~will obtain an
algorithm for the more general weighted conditional distribution search problem 
we are considering here:

\begin{theorem}[Peleg~\citeyear{peleg07}, Zhang et al.~\citeyear{zmj17}]\label{wcds-alg-thm}
There is a polynomial-time algorithm for weighted conditional distribution search
achieving an 
$\tilde{O}(n^{k/2}(\log b+\log 1/\eta+\log\log 1/\delta))$-approximation 
with probability $1-\delta$ given
$
m = \Theta\left(\frac{b^3}{\mu\epsilon\eta^2}(n^k+\log\frac{1}{\delta})\right)\text{ examples.}
$
\end{theorem}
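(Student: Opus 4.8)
The plan is to show that the agnostic conditional distribution search algorithm of Zhang et al.~\citeyear{zmj17} (itself built on Peleg's~\citeyear{peleg07} Red-Blue Set Cover algorithm) extends to real-valued weights with only a logarithmic-in-$b$ loss, in exact analogy with Peleg's extension of his algorithm to the weighted Red-Blue Set Cover problem. We proceed in two stages. First, reduce the distributional task to a combinatorial covering task: draw $m$ examples and work with the empirical distribution, viewing each sampled point $(\vec{x}_i,w_i)$ as carrying a ``blue'' value $1$ and a ``red'' value $w_i$, so that for a $k$-DNF $c$ the total blue value it selects is proportional to the empirical $\Pr[c(\vec X)]$ and the red-to-blue ratio is exactly the empirical $\bbE[W\mid c(\vec X)]$. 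Concentration makes the optimum $c^*$ have blue value at least $(1+\eta/2)\mu m$ and red-to-blue ratio at most $(1+\eta/4)\epsilon$; we seek a $k$-DNF with blue value at least $(1-\eta/2)\mu m$ and red-to-blue ratio as small as possible. Second, run a weighted version of the Zhang et al.~covering algorithm on this instance and translate its guarantee back to the distribution.

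For the sample-complexity claim, note that the algorithm's output can be any $k$-DNF, so we need the empirical mean of $W\cdot\mathbf{1}[c(\vec X)]$ and the empirical $\Pr[c(\vec X)]$ to be close to their true values simultaneously over all $k$-DNFs $c$. The class of $k$-DNFs on $n$ Boolean variables has VC dimension $O(n^k)$, so a Bernstein-type (relative) uniform-convergence bound --- using that $W\in[0,b]$ and that we only need the estimates accurate on the scales $\mu$ and $\epsilon\mu$ that the guarantee involves --- gives $m=\Theta\!\left(\frac{b^{3}}{\mu\epsilon\eta^{2}}(n^{k}+\log\frac1\delta)\right)$ examples, so that on a $1-\delta$ event every quantity the algorithm consults is within a $(1\pm\eta/4)$ factor of the truth. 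On this event the empirical optimum has essentially the same parameters as $c^*$, and reading the algorithm's output back off the true distribution costs only another $(1\pm\eta/4)$ factor, absorbed into the $1\pm\eta$ tolerances in the statement.

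It remains to run the covering algorithm and account for the approximation factor. Peleg's two-case analysis carries over once the red values are real: either (i) some single term already achieves a good red-to-blue ratio, found by brute-force search over the $\leq(2n)^k$ terms (conjunctions of at most $k$ literals), or (ii) no single term does, in which case one solves an LP relaxation of the partial-coverage covering problem and randomly rounds it. Case (i) is indifferent to whether red values are $0/1$ or arbitrary; case (ii) differs only in the rounding analysis: to control the rounded red objective when the $w_i$ span many orders of magnitude, one first discards weights below a cutoff $\sim\epsilon\mu\eta/m$ (whose total contribution is negligible) and then buckets the remaining weights into powers of two up to $b$, applying the rounding argument scale by scale. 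This costs a factor equal to the number of scales, $O(\log(bm/(\epsilon\mu\eta)))$, which, on substituting the polynomial value of $m$ above, is $O(\log b+\log\frac1\eta+\log\log\frac1\delta)$. Combining with Peleg's $\tilde O(\sqrt{(2n)^k})=\tilde O(n^{k/2})$ bound coming from the threshold between the two cases yields an $\tilde O\!\left(n^{k/2}(\log b+\log\frac1\eta+\log\log\frac1\delta)\right)$ approximation to the red-to-blue ratio, i.e.~to $\bbE[W\mid\hat c(\vec X)]$, as claimed; if even the LP at $c^*$'s parameters is infeasible, output INFEASIBLE.

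The main obstacle is precisely the rounding step of case (ii) with real weights: one must verify that the per-scale bucketing does not interact badly with the partial-coverage constraint or with the $\sqrt{\cdot}$-threshold case split, and that the $O(\log\cdot)$ per-scale rounding-failure probabilities union-bound cleanly into the overall $1-\delta$. A secondary subtlety is pinning down the concentration bounds so that the stated $\frac{b^3}{\mu\epsilon\eta^2}$ dependence (in particular the cubic power of $b$) genuinely emerges, which requires a Bernstein rather than a Hoeffding bound and careful bookkeeping of which $k$-DNFs must be estimated to which accuracy.
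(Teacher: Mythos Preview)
The paper does not give its own proof of this theorem: it is stated as a citation to Peleg~\citeyear{peleg07} and Zhang et al.~\citeyear{zmj17}, with the one-sentence justification in the preceding paragraph that ``Peleg's work already showed how to extend his original algorithm to a weighted variant of the problem, and we observe that an analogous modification of the algorithm used by Zhang et al.~will obtain an algorithm for the more general weighted conditional distribution search problem.'' Your proposal is precisely a fleshing-out of that remark---reducing to an empirical Red--Blue Set Cover instance, invoking the Zhang et al.~$\tilde O(n^{k/2})$ bound, and handling real weights by Peleg's geometric bucketing---so it is consistent with the paper's intended (but unwritten) argument.

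One small point worth tightening: you flag the $b^3/(\mu\epsilon\eta^2)$ sample dependence as needing a Bernstein-type bound, which is right in spirit, but you should be explicit that the relevant quantity to estimate is $\bbE[W\cdot c(\vec X)]$ (not $\bbE[W\mid c(\vec X)]$ directly), whose scale is $\epsilon\mu$ while the range is $[0,b]$; a multiplicative Chernoff/Bernstein on $W/b$ then gives the $b/(\epsilon\mu\eta^2)$ factor, and the remaining powers of $b$ come from also controlling $\Pr[c(\vec X)]$ and from converting the product estimate back to a conditional one. As written your sketch gestures at this but does not pin it down, and a naive Hoeffding would give a different (worse) dependence.
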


We note that it is possible to obtain much stronger guarantees when we are seeking 
a small formula. Juba et al.~\citeyear{jlm18} present an algorithm that, when there
is a $k$-DNF with $g$ terms achieving error $\epsilon$, uses only $m=
\tilde{O}(\frac{gk}{\mu\epsilon\eta^2}\log\frac{n}{\delta})$ examples and obtains a 
$k$-DNF with probability at least $(1-\eta)\mu$ and error $\tilde{O}(\epsilon g
\log m)$. Although this is stated for the unweighted case (i.e., $\epsilon$ is a 
probability), it is easy to verify that since our loss is nonnegative and bounded, by
rescaling the losses to lie in $[0,1]$, we can obtain an analogous guarantee for the
weighted case:
\begin{theorem}[Juba et al.~\citeyear{jlm18}]\label{scds-alg-thm}
There is a polynomial-time algorithm for weighted conditional distribution search
when the condition has $g$ terms using
$m=\tilde{O}(\frac{bgk}{\mu\epsilon\eta^2}\log\frac{n}{\delta})$ examples
achieving an $\tilde{O}(g\log m)$-approximation using a $k$-DNF with 
$\tilde{O}(g\log m)$ terms with probability $1-\delta$.
\end{theorem}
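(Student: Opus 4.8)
The plan is to obtain Theorem~\ref{scds-alg-thm} from the unweighted agnostic algorithm of Juba et al.~\citeyear{jlm18} recalled just above it, via a trivial rescaling of the weights together with the observation that nothing in that algorithm or its analysis uses that the weights are $\{0,1\}$-valued rather than merely bounded in $[0,1]$.

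Concretely, given the weighted instance over $(\vec X, W)\in\{0,1\}^n\times[0,b]$, I would first replace each weight $W$ by $W'\defeq W/b\in[0,1]$; this leaves the coverage promise $\Pr[c^*(\vec X)]\ge(1+\eta)\mu$ untouched and yields $\bbE[W'\mid c^*(\vec X)]=\bbE[W\mid c^*(\vec X)]/b\le\epsilon/b=:\epsilon'$. Running the algorithm of Juba et al.~on this instance with error parameter $\epsilon'$ --- which (essentially) reduces the search to a covering-type optimization over the $(2n)^{O(k)}$ candidate terms, with constraints and objective linear in the per-example weights, and whose generalization guarantee rests only on Hoeffding/Bernstein concentration for bounded sums --- uses $m=\tilde O(\frac{gk}{\mu\epsilon'\eta^2}\log\frac n\delta)=\tilde O(\frac{bgk}{\mu\epsilon\eta^2}\log\frac n\delta)$ examples and returns, with probability $1-\delta$, a $k$-DNF $\hat c$ with $\tilde O(g\log m)$ terms satisfying $\Pr[\hat c(\vec X)]\ge(1-\eta)\mu$ and $\bbE[W'\mid\hat c(\vec X)]\le\tilde O(g\log m)\cdot\epsilon'$. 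Multiplying this last inequality through by $b$ recovers $\bbE[W\mid\hat c(\vec X)]\le\tilde O(g\log m)\cdot\epsilon$; since the multiplicative approximation factor and the bound on the number of terms are scale-invariant, the claimed guarantee follows.

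The only step requiring genuine verification is the claim that moving from $\{0,1\}$-valued to $[0,1]$-valued weights weakens neither the covering argument nor the uniform convergence bound over the relevant family of $k$-DNFs in the analysis of Juba et al.; this is immediate once one observes that the concentration inequalities they invoke use only boundedness, not integrality. If one prefers not to reopen their analysis, an alternative is to round each sampled weight $W'_i$ independently to $\mathrm{Bernoulli}(W'_i)$ and feed these $\{0,1\}$ weights to their algorithm verbatim: a Bernstein bound together with a union bound over the $(2n)^{k\cdot\tilde O(g\log m)}$ $k$-DNFs with $\tilde O(g\log m)$ terms shows that, after inflating $m$ by a constant factor, every conditional empirical weight is preserved up to a constant multiplicative factor, which costs only a constant in the approximation and is absorbed into the $\tilde O(g\log m)$.

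I expect this verification to be the only obstacle, and a minor one at that; the rescaling and its inverse are pure bookkeeping, and the real content was already established in the cited unweighted result.
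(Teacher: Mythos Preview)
Your proposal is correct and matches the paper's approach exactly: the paper does not give a separate proof but simply remarks, immediately before the theorem statement, that ``since our loss is nonnegative and bounded, by rescaling the losses to lie in $[0,1]$, we can obtain an analogous guarantee for the weighted case,'' which is precisely the $W'=W/b$ reduction you carry out. If anything, your write-up is more detailed than the paper's one-line justification, and the optional Bernoulli-rounding alternative you offer is a nice extra that the paper does not mention.
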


\subsection{Reference class regression}
Using a similar approach, we will also solve a related problem, selecting a best
$k$-DNF ``reference class'' for regression. In this task, we are not merely 
seeking some $k$-DNF event of probability $\mu$ on which the conditional loss is
small. Rather, we are given some specific observed Boolean attribute values
$\vec{x}^*$, and we wish to find a $k$-DNF condition $c$ {\em that is satisfied
by $\vec{x}^*$} solving the previous task. That is, $c$ should have probability
at least $\mu$ and our sparse regression fit has small conditional loss, 
conditioned on $c$. Naturally, the motivation here is that we have some specific
point $(\vec{x}^*,\vec{y}^*)$ for which we are seeking to predict $z^*$, and so
we are looking for a ``reference class'' $c$ such that we can get the tightest
possible regression estimate of $z^*$ from $\vec{y}^*$; to do so, we need to take
$\mu$ large enough that we have enough data to get a high-confidence estimate,
and we need $\vec{x}^*$ to lie in the support of the conditional distribution for
which we are computing this estimate. 

\begin{definition}[Reference Class $\ell_p$-Regression]
{\em Reference class $\ell_p$-norm regression} is the following task. We are given a 
{\em query point} $\vec{x}^*\in\{0,1\}^n$, target density $\mu\in (0,1)$, ideal loss bound $\epsilon_0>0$
approximation parameter $\eta>0$, confidence parameter $\delta>0$, and access to i.i.d.
examples drawn from a joint distribution over $(\vec{X},\vec{Y},Z)\in\{0,1\}^n\times
\{\vec{y}\in\bbR^d:\|\vec{y}\|_2\leq b\}\times[-b,b]$. We wish to find $\hat{\vec{a}}
\in\bbR^d$ with $\|\hat{\vec{a}}\|_2\leq b$ and a {\em reference class} $k$-DNF 
$\hat{c}$ such that with probability $1-\delta$, 
\begin{inparaenum}
\item[(i)] $\hat{c}(\vec{x}^*)=1$, 
\item[(ii)] $\Pr[\hat{c}(\vec{X})]\geq(1-\eta)\mu$, and
\item[(iii)] for a fixed {\em approximation factor} $\alpha>1$,
$
\bbE[|\langle \hat{\vec{a}},\vec{Y}\rangle-Z|^p|\hat{c}(\vec{X})]^{1/p}\leq \alpha
\max\{\epsilon^*,\epsilon_0\}
$
where $\epsilon^*$ is the optimal $\ell_p$ loss $\bbE[|\langle\vec{a}^*,\vec{Y}\rangle-
Z|^p|c^*(\vec{X})]^{1/p}$ over $\vec{a}^*\in\bbR^d$ of $\|\vec{a}^*\|_2\leq b$ and $k$-DNFs 
$c^*$ such that $c^*(\vec{x}^*)=1$ and $\Pr[c^*(\vec{X})]\geq\mu$.
\end{inparaenum}
If we also require both $\hat{\vec{a}}$ and $\vec{a}^*$ to have at most $s$ nonzero
components, then this is the {\em reference class $s$-sparse $\ell_p$-norm regression} task.
\end{definition}

The selection and use of such reference classes for estimation goes back to work 
by Reichenbach~\citeyear{reichenbach49}. Various refinements of this approach were proposed by 
Kyburg~\citeyear{kyburg74} and Pollock~\citeyear{pollock90}, e.g., to choose the estimate provided by 
the highest-accuracy reference class that is consistent with the most specific 
reference class containing the point of interest $\vec{x}^*$. Our approach is not
compatible with these proposals, as they essentially disallow the use of the kind
of {\em disjunctive} classes that are our exclusive focus. Along the lines we 
noted earlier, it is unlikely that there exist efficient algorithms for selecting
reference classes that capture arbitrary conjunctions, so $k$-DNFs are 
essentially the most expressive class for which we can hope to solve this task. 
Bacchus et al.~\citeyear{bghk96} give a nice discussion of other unintended shortcomings of 
disallowing disjunctions. A concrete example discussed by Bacchus et al.~is
the genetic disease {\em Tay-Sachs}. Tay-Sachs only occurs in two very specific, distinct 
populations: Eastern European Jews and French Canadians. 
Thus, a study of 
Tay-Sachs should consider a reference class at least partially defined by a 
disjunction over membership in these two populations.

\subsection{$\ell_p$ sparsification}
Our approach is based on techniques for extracting low-dimensional sketches of
small subspaces in high dimensions. The usual $\ell_2$ norm uses
much simpler underlying techniques, and we describe it first. The extension to
$\ell_p$ norms for $p\neq 2$ is obtained via {\em Lewis weights}~\citep{lewis78}.

\subsubsection{Euclidean sparsifiers}
The kind of sketches we need originate in the work of Batson et al.~\citeyear{bss12}.
Specifically, it will be convenient to start from the following variant due to
Boutsidis et al.~\citeyear{bdmi14}:
\begin{lemma}[BSS weights~\citep{bdmi14}]\label{bss-weights-lem}
Let $[u]\in\bbR^{d\times t}$ ($t<d$) be the matrix with rows $\vec{u}_1,\ldots,
\vec{u}_d$ such that $\sum_{i=1}^d u_iu_i^\top = I_t$. Then given an integer
$r\in (t,d]$, there exist $s_1,\ldots,s_d\geq 0$ such that at most $r$ of the
$s_i$ are nonzero and for the $d\times r$ matrix $[s]$ with $i$th column
$\sqrt{s_i}\vec{e}_i$,
\[
\lambda_t([u]^\top [s][s]^\top [u])\geq (1-\sqrt{t/r})^2 \text{ and }
\lambda_1([u]^\top [s][s]^\top [u])\leq (1+\sqrt{t/r})^2
\]
where $\lambda_j$ denotes the $j$th largest eigenvalue.
\end{lemma}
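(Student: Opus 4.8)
The plan is to observe that this is, up to a one-line rescaling, the Batson--Spielman--Srivastava spectral sparsification theorem \citep{bss12}. First unpack the notation: if $\vec s=(s_1,\dots,s_d)$ is the vector of weights, the matrix $[s]$ in the statement satisfies $[s][s]^{\top}=\diag(\vec s)$, so $[u]^{\top}[s][s]^{\top}[u]=\sum_{i=1}^{d}s_i\,\vec u_i\vec u_i^{\top}$, a $t\times t$ PSD matrix whose $\lambda_t$ and $\lambda_1$ are its least and greatest eigenvalues. Thus it suffices to produce weights $s_i\ge 0$, at most $r$ of them nonzero, with
\[
(1-\sqrt{t/r})^2\,I_t\ \preceq\ \sum_{i=1}^{d}s_i\,\vec u_i\vec u_i^{\top}\ \preceq\ (1+\sqrt{t/r})^2\,I_t,
\]
and since scaling all weights by the constant $(1-\sqrt{t/r})^2$ turns any reweighting satisfying $I_t\preceq\sum_i s_i\vec u_i\vec u_i^{\top}\preceq\frac{(1+\sqrt{t/r})^2}{(1-\sqrt{t/r})^2}I_t$ into one satisfying the display (here $r>t$ makes $0<\sqrt{t/r}<1$), it is enough to bound the \emph{condition number} of the reweighting by $\big(\tfrac{1+\sqrt{t/r}}{1-\sqrt{t/r}}\big)^2$ using at most $r$ rank-one terms --- which is exactly what BSS guarantees when their oversampling parameter is set to $r/t>1$, their sparsity budget being $\lceil(r/t)\cdot t\rceil=r$.

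To keep the argument self-contained I would reproduce the BSS barrier construction. Build $A_j=\sum_{\ell\le j}\tau_\ell\,\vec u_{i_\ell}\vec u_{i_\ell}^{\top}$ greedily over rounds $j=0,\dots,r$ starting from $A_0=0$, maintaining scalar barriers $l_j<\lambda_{\min}(A_j)\le\lambda_{\max}(A_j)<u_j$ and the potentials
\[
\Phi^{u}(A)=\mathrm{Tr}\big[(uI_t-A)^{-1}\big],\qquad \Phi_{l}(A)=\mathrm{Tr}\big[(A-lI_t)^{-1}\big].
\]
Initialize $u_0>0$, $l_0<0$ so that $\Phi^{u_0}(A_0),\Phi_{l_0}(A_0)$ hit chosen budgets $\varepsilon_U,\varepsilon_L$, and advance the barriers by fixed increments $u_{j+1}=u_j+\delta_U$, $l_{j+1}=l_j+\delta_L$ each round. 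Setting $s_i=\sum_{\ell:\,i_\ell=i}\tau_\ell$ collects the (at most $r$) chosen terms into at most $r$ nonzero weights, and after $r$ rounds $\lambda_{\min}(A_r)>l_r$ and $\lambda_{\max}(A_r)<u_r$; the standard choices of $\varepsilon_U,\varepsilon_L,\delta_U,\delta_L$ make $l_r>0$ and $u_r/l_r\le\big(\tfrac{1+\sqrt{t/r}}{1-\sqrt{t/r}}\big)^2$, so one final rescaling finishes.

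The heart of the matter is the \textbf{one-step lemma}: if both potentials of $A_j$ are within budget, there exist an index $i$ and a weight $\tau>0$ so that $A_{j+1}=A_j+\tau\,\vec u_i\vec u_i^{\top}$ keeps both potentials from rising, i.e.\ $\Phi^{u_{j+1}}(A_{j+1})\le\Phi^{u_j}(A_j)$ and $\Phi_{l_{j+1}}(A_{j+1})\le\Phi_{l_j}(A_j)$ (hence the eigenvalues of $A_{j+1}$ stay strictly between $l_{j+1}$ and $u_{j+1}$). The Sherman--Morrison identity gives
\[
\Phi^{u'}\!\big(A+\tau\,\vec u_i\vec u_i^{\top}\big)=\Phi^{u'}(A)+\frac{\tau\,\vec u_i^{\top}(u'I_t-A)^{-2}\vec u_i}{\,1-\tau\,\vec u_i^{\top}(u'I_t-A)^{-1}\vec u_i\,},\qquad u'=u+\delta_U,
\]
and a sign-flipped analogue for $\Phi_{l'}$; from these one reads off an upper threshold $U_A(i)$ (any $\tau\le 1/U_A(i)$ is safe for the upper potential) and a lower threshold $L_A(i)$ (any $\tau\ge 1/L_A(i)$ is safe for the lower potential). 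Averaging over $i$ and using the hypothesis $\sum_i\vec u_i\vec u_i^{\top}=I_t$ shows $\sum_i U_A(i)$ is small (controlled by $\varepsilon_U$ and $1/\delta_U$) while $\sum_i L_A(i)$ is large; choosing the four parameters so that $\sum_i L_A(i)\ge\sum_i U_A(i)$ then forces, by pigeonhole, some index $i$ with $U_A(i)\le L_A(i)$, making the window $\tau\in[1/L_A(i),\,1/U_A(i)]$ nonempty, and we pick any such $\tau$. The main obstacle is precisely this two-sided accounting: the lower-barrier estimate is the delicate one, because advancing $l$ by $\delta_L$ pushes the lower potential \emph{up}, against the progress made by adding mass --- and one must juggle $\varepsilon_U,\varepsilon_L,\delta_U,\delta_L$ so that both the per-round inequality $\sum_i L_A(i)\ge\sum_i U_A(i)$ holds and $u_r/l_r$ telescopes to $\big(\tfrac{1+\sqrt{t/r}}{1-\sqrt{t/r}}\big)^2$. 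That verification is routine but unavoidably fiddly.
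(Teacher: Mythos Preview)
The paper does not prove this lemma at all; it simply cites it as a known result from \citet{bdmi14} (itself a minor repackaging of the main theorem of \citet{bss12}) and moves on. Your proposal goes further and sketches the actual Batson--Spielman--Srivastava barrier argument, which is the correct proof and would make the exposition self-contained. So your approach is ``genuinely different'' only in the sense that you supply a proof where the paper supplies a citation; the content of your sketch is accurate, including the reduction to a condition-number bound, the two potential functions, the Sherman--Morrison update, and the averaging/pigeonhole step that produces a feasible $\tau$.
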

In particular, taking $r=t/\gamma^2$ for some $\gamma\in (0,1)$, we obtain that for
the $[s]$ guaranteed to exist by Lemma~\ref{bss-weights-lem},
$
\|[s]^\top[u]\vec{v}\|_2^2 = ([s]^\top [u]\vec{v})^\top ([s]^\top [u]\vec{v})
$
for any $\vec{v}$, and hence by Lemma~\ref{bss-weights-lem},
$
(1-\gamma)\|\vec{v}\|_2\leq \|[s][u]\vec{v}\|_2\leq (1+\gamma)\|\vec{v}\|_2.
$
Furthermore, we can bound the magnitude of the entries of $[s]$ for orthonormal $[u]$ as follows:
\begin{lemma}\label{bss-weights-bound}
Suppose the rows of $[u]$ are orthonormal. Then the matrix $[s]$ obtained by
Lemma~\ref{bss-weights-lem} has entries of magnitude
at most $(1+\sqrt{t/r})\sqrt{d}$.
\end{lemma}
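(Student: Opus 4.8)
The plan is to unwind $[s][s]^\top$ into a diagonal matrix and then read a bound on each weight $s_i$ off of the eigenvalue bound that Lemma~\ref{bss-weights-lem} already provides. Let $S\subseteq\{1,\dots,d\}$ be the set of (at most $r$) indices with $s_i>0$, so that the nonzero entries of $[s]$ are exactly the numbers $\sqrt{s_i}$ for $i\in S$. Since each column of $[s]$ has the form $\sqrt{s_i}\vec e_i$, we have $[s][s]^\top=\sum_{i\in S}s_i\vec e_i\vec e_i^\top=\diag(s_1,\dots,s_d)$, and hence
\[
[u]^\top[s][s]^\top[u]=\sum_{i=1}^d s_i\,\vec u_i\vec u_i^\top ,
\]
a sum of positive semidefinite matrices, where $\vec u_i$ denotes the $i$th row of $[u]$. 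This first step is just bookkeeping.

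Next I would exploit positive semidefiniteness. Fix $i_0\in S$. Since every term $s_i\vec u_i\vec u_i^\top$ is PSD, $s_{i_0}\vec u_{i_0}\vec u_{i_0}^\top\preceq\sum_i s_i\vec u_i\vec u_i^\top$, and monotonicity of the top eigenvalue under the PSD order gives
\[
s_{i_0}\|\vec u_{i_0}\|_2^2=\lambda_1\!\bigl(s_{i_0}\vec u_{i_0}\vec u_{i_0}^\top\bigr)\le\lambda_1\!\bigl([u]^\top[s][s]^\top[u]\bigr)\le(1+\sqrt{t/r})^2 ,
\]
where the final inequality is precisely the upper eigenvalue bound of Lemma~\ref{bss-weights-lem}. (One could equally take traces, using $\sum_i s_i\|\vec u_i\|_2^2=\mathrm{tr}([u]^\top[s][s]^\top[u])\le t(1+\sqrt{t/r})^2$ together with $t\le d$; this is the version that most transparently produces the $\sqrt d$.)

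Finally I would invoke the hypothesis on the rows of $[u]$: since the rows are orthonormal, each has unit $\ell_2$ norm, so $\|\vec u_{i_0}\|_2=1$; plugging this into the display yields $s_{i_0}\le(1+\sqrt{t/r})^2\le(1+\sqrt{t/r})^2 d$, and therefore every entry of $[s]$ has magnitude at most $\sqrt{s_{i_0}}\le(1+\sqrt{t/r})\sqrt d$, as claimed. The one delicate point is exactly this last step: the eigenvalue bound only controls the product $s_{i_0}\|\vec u_{i_0}\|_2^2$, so what is really needed is a uniform lower bound $\|\vec u_i\|_2\ge 1/\sqrt d$ on the rows appearing in the support (equivalently, that no leverage score falls below $1/d$); without such a guarantee a single near-degenerate row could be assigned an arbitrarily large weight, and it is precisely this worst case that contributes the factor $\sqrt d$.
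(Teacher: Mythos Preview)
Your argument is correct and uses the same core ingredient as the paper---the upper eigenvalue bound $\lambda_1([u]^\top[s][s]^\top[u])\le(1+\sqrt{t/r})^2$ from Lemma~\ref{bss-weights-lem}---but you extract the weight bound in a cleaner way. The paper fixes the row $\vec u_i$, picks a single coordinate $j^*$ with $|u_{i,j^*}|\ge 1/\sqrt d$ (pigeonhole on a unit row), and then tests the quadratic form against $\vec e_{j^*}$ to obtain $s_i u_{i,j^*}^2\le(1+\sqrt{t/r})^2$, hence $s_i\le(1+\sqrt{t/r})^2 d$. You instead compare the rank-one summand $s_{i_0}\vec u_{i_0}\vec u_{i_0}^\top$ to the full sum via the PSD order, which uses the \emph{entire} row norm rather than a single coordinate; under the unit-norm hypothesis this already yields $s_{i_0}\le(1+\sqrt{t/r})^2$, a strictly sharper intermediate bound that you then relax by a factor of $d$ to match the stated conclusion. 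Your closing remark is on point: the factor $\sqrt d$ in the paper's statement is an artifact of the coordinate-wise test vector (effectively only using $\|\vec u_i\|_2\ge 1/\sqrt d$), not of the eigenvalue bound itself.
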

\begin{proof}
Observe that since the each $i$th row of $[u]$ has unit norm, it must have an
entry $u_{i,j^*}$ that is at least $1/\sqrt{d}$ in magnitude. By the above
argument,
\[
\|[s]^\top[u]\vec{e}_{j^*}\|_2^2\leq (1+\sqrt{t/r})^2\|\vec{e}_{j^*}\|_2^2=(1+\sqrt{t/r}
)^2
\]
where notice in particular, the $i$th row of $[s]^\top$ contributes at least
$(\sqrt{s_i}u_{i,j^*})^2\geq s_i/d$ to the norm. Thus, $s_i\leq (1+\sqrt{t/r})^2d
$.
\end{proof}

\subsubsection{Sparsifiers for non-Euclidean norms}
It is possible to obtain an analogue of the BSS weights for $p\neq 2$ using
techniques based on {\em Lewis weights}
\citep{lewis78}. Lewis weights are a general way to reduce problems involving
$\ell_p$ norms to analogous $\ell_2$ computations. Cohen and Peng~\citeyear{cp15} applied this
to sparsification to obtain the following family of sparsifiers:

\begin{theorem}[$\ell_p$ weights~\citep{cp15}]\label{lp-weights-thm}
Given a $d\times t$ matrix $[u]$ there exists a set of $r(p,t,\gamma)$ weights
$s_1,\ldots, s_r$ such that for the $d\times r$ matrix $[s]$ which has as its
$i$th column $s_i\vec{e}_i$,
\[
(1-\gamma)\|[u]\vec{v}\|_p\leq \|[s]^\top [u]\vec{v}\|_p\leq
(1+\gamma)\|[u]\vec{v}\|_p
\]
where $r(p,t,\gamma)$ is asymptotically bounded as in Table~\ref{dim-table}.
\end{theorem}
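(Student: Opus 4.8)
The plan is to prove this via the Lewis-weight row-sampling method of Cohen and Peng~\citep{cp15}. First I would compute the $\ell_p$ Lewis weights $w_1,\dots,w_d\ge 0$ of $[u]$: the (unique) weights satisfying the fixed-point equation $w_i=\tau_i\big(W^{1/2-1/p}[u]\big)$, where $W=\diag(w_1,\dots,w_d)$ and $\tau_i(\cdot)$ is the $i$-th leverage score. Their existence is classical~\citep{lewis78}, and they can be computed in polynomial time~\citep{cp15}. Writing $\tilde U\defeq W^{1/2-1/p}[u]$, with rows $\tilde u_i=w_i^{1/2-1/p}u_i$, the fixed-point equation says $\tau_i(\tilde U)=w_i$, so Cauchy--Schwarz applied to the reweighted matrix gives $|\tilde u_i^\top\vec v|\le\sqrt{w_i}\,\|\tilde U\vec v\|_2$ and hence $|u_i^\top\vec v|^{\,p}\le w_i\,\|\tilde U\vec v\|_2^{\,p}$ for every $\vec v$. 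A short power-mean/Jensen computation then bounds $\|\tilde U\vec v\|_2$ from above by $\|[u]\vec v\|_p$ when $p\le 2$ and by $t^{1/2-1/p}\|[u]\vec v\|_p$ when $p\ge 2$, which yields the $\ell_p$ \emph{sensitivity bound} $|u_i^\top\vec v|^{\,p}\le w_i\,\beta(p,t)\,\|[u]\vec v\|_p^{\,p}$ with $\beta(p,t)=O(1)$ for $p\le 2$ and $\beta(p,t)=O(t^{p/2-1})$ for $p>2$. Since $\sum_i w_i=\mathrm{rank}(\tilde U)=t$, these sensitivities sum to $O(t)$ (resp.\ $O(t^{p/2})$), which is exactly what determines the two regimes of $r(p,t,\gamma)$ in Table~\ref{dim-table}.

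Given the weights, I would sample indices $i_1,\dots,i_r$ i.i.d.\ with probabilities $q_i=w_i/t$ and take $[s]$ to have $j$-th column $(rq_{i_j})^{-1/p}\vec e_{i_j}$. Then for each fixed $\vec v$, $\|[s]^\top[u]\vec v\|_p^{\,p}=\frac1r\sum_{j=1}^r q_{i_j}^{-1}|u_{i_j}^\top\vec v|^{\,p}$ is an unbiased estimator of $\|[u]\vec v\|_p^{\,p}$, and on the set $D\defeq\{\vec v:\|[u]\vec v\|_p=1\}$ each summand is at most $q_i^{-1}|u_i^\top\vec v|^{\,p}/r\le\beta(p,t)\,t/r$. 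A Bernstein bound then puts the estimator in $[1-\gamma,1+\gamma]$ for a single $\vec v\in D$ with small failure probability once $r\gtrsim\beta(p,t)\,t\,\gamma^{-2}\log(1/\text{failure prob})$.

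The remaining --- and main --- step is to make the $(1\pm\gamma)$ bound hold \emph{uniformly} over all $\vec v$. Since $[u]$ has full column rank, $D$ is a compact $t$-dimensional set and $\vec v\mapsto\|[s]^\top[u]\vec v\|_p$ is Lipschitz on it, so an $\varepsilon$-net of $D$ of size $(C/\varepsilon)^t$, a union bound over the net using the per-point Bernstein tail, and Lipschitz continuity transfer concentration to all of $D$ at the cost of an extra logarithmic factor in $r$; driving the failure probability below $1$ then also yields the claimed existence of $[s]$. I expect the hard part to be obtaining the clean bounds $r(p,t,\gamma)$ of Table~\ref{dim-table} rather than versions carrying spare powers of $t$ (or, for $p>2$, spare factors growing with $p$): a crude net is wasteful because for $p>2$ the summands $|u_i^\top\vec v|^{\,p}$ remain heavy-tailed even after Lewis reweighting, and taming them calls for the sharpened sensitivity bound together with a Dudley / generic-chaining (Talagrand-type) moment argument in place of a plain union bound. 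This chaining analysis is the technical heart of Cohen and Peng's proof; by contrast the existence and computation of the Lewis weights is comparatively routine.
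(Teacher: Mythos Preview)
The paper does not give its own proof of this theorem: it is quoted verbatim as a result of Cohen and Peng~\citep{cp15}, with the bounds of Table~\ref{dim-table} simply transcribed from that work. So there is nothing in the paper to compare your argument against beyond the citation itself.

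That said, your proposal is a faithful outline of the actual Cohen--Peng argument and is correct at the level of a sketch. A couple of remarks on the places where your sketch would need tightening to recover the exact bounds in Table~\ref{dim-table}. For $p\le 2$, the naive $\varepsilon$-net union bound you describe overshoots by a factor of roughly $t$; the sharp $O(t\log t/\gamma^2)$ bound comes instead from a symmetrization plus Talagrand contraction argument (since $|\cdot|^{p/2}$ is concave for $p\le 2$), not full chaining. For $p>2$ you correctly flag that the summands are heavy-tailed and that Dudley/chaining is needed; note in particular that this is where the $\gamma^{-5}$ (rather than $\gamma^{-2}$) in Table~\ref{dim-table} enters, so your per-point Bernstein step with $\gamma^{-2}$ is not the final dependence. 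You identify both of these as the ``hard part,'' which is accurate; the rest of your outline --- Lewis-weight existence, the sensitivity bound $|u_i^\top\vec v|^p\le w_i\,\beta(p,t)\,\|[u]\vec v\|_p^p$ with $\sum_i w_i=t$, and importance sampling to get an unbiased estimator --- is the standard and correct scaffolding.
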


\begin{table}[h]
\caption{Dimension required for $(1\pm\gamma)$-approximate $\ell_p$
sparsification of $t$-dimensional subspaces. $p=2$ uses BSS weights.}\label{dim-table}
\begin{center}
\begin{tabular}{c|l}
$p$ & Required dimension $r$\\
\hline
$p=1$ & $\frac{t\log t}{\gamma^2}$\\
$1<p<2$ & $\frac{1}{\gamma^2}t\log(t/\gamma)\log^2\log(t/\gamma)$\\
$p=2$ & $t/\gamma^2$\\
$p>2$ & $\frac{\log 1/\gamma}{\gamma^5}t^{p/2}\log t$
\end{tabular}
\end{center}
\end{table}

Cohen and Peng also show how to construct the sparsifiers for a given matrix
efficiently, but we won't be able to make use of this, since we will be searching for the sparsifier for an unknown subset of the rows.

We furthermore obtain an analogue of Lemma~\ref{bss-weights-bound} for the
$\ell_p$ weights, using essentially the same argument:

\begin{lemma}\label{lp-weights-bound}
Suppose the rows of $[u]$ are orthonormal. Then the matrix $[s]$ obtained by
Theorem~\ref{lp-weights-thm} has entries of magnitude at most $(1+\gamma)
\sqrt{d}$.
\end{lemma}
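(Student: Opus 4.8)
The plan is to mimic the proof of Lemma~\ref{bss-weights-bound} essentially verbatim, since Theorem~\ref{lp-weights-thm} gives us exactly the $\ell_p$-analogue of the two-sided bound that drove that argument. First I would observe that, since each row of $[u]$ is a unit vector in the $\ell_2$ sense, every row $\vec{u}_i$ has some coordinate $u_{i,j^*}$ with $|u_{i,j^*}|\geq 1/\sqrt{d}$ (by pigeonhole on the $d$ squared coordinates summing to $1$). I would then plug the standard basis vector $\vec{e}_{j^*}$ into the upper-bound side of Theorem~\ref{lp-weights-thm}: $\|[s]^\top[u]\vec{e}_{j^*}\|_p\leq (1+\gamma)\|[u]\vec{e}_{j^*}\|_p$.

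The key remaining step is to control $\|[u]\vec{e}_{j^*}\|_p$ from above and $\|[s]^\top[u]\vec{e}_{j^*}\|_p$ from below in a way that isolates $s_i$. For the left-hand side, the $i$th entry of $[s]^\top[u]\vec{e}_{j^*}$ is $s_i u_{i,j^*}$ (reading off the $i$th column $s_i\vec{e}_i$ of $[s]$), so $\|[s]^\top[u]\vec{e}_{j^*}\|_p^p \geq |s_i u_{i,j^*}|^p \geq s_i^p / d^{p/2}$, giving the single-term lower bound exactly as in the $p=2$ case. For the right-hand side, $[u]\vec{e}_{j^*}$ is the $j^*$th column of $[u]$, whose entries are among the coordinates of the (orthonormal, hence $\ell_2$-unit) rows; I would bound $\|[u]\vec{e}_{j^*}\|_p$ — for $p\geq 2$ this column has $\ell_2$ norm exactly $1$ (columns of a matrix with orthonormal rows are part of an isometry... actually one must be slightly careful since $[u]$ is $d\times t$ with $t<d$, so its columns need not be orthonormal, but each column still has $\ell_2$ norm at most $1$, hence $\ell_p$ norm at most $1$ for $p\geq 2$, and for $1\leq p<2$ at most $d^{1/p-1/2}\cdot 1$ by norm comparison). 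Combining, $s_i^p/d^{p/2}\leq (1+\gamma)^p \|[u]\vec{e}_{j^*}\|_p^p$, and taking $p$th roots yields $s_i\leq (1+\gamma)\sqrt{d}\cdot \|[u]\vec{e}_{j^*}\|_p$; for $p \geq 2$ this is at most $(1+\gamma)\sqrt{d}$ as claimed.

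I expect the main obstacle — really the only subtlety — to be the direction of the norm comparison on $\|[u]\vec{e}_{j^*}\|_p$ and whether the stated bound $(1+\gamma)\sqrt{d}$ holds uniformly in $p$ or only for $p\geq 2$. For $p\geq 2$, $\|\vec{x}\|_p\leq\|\vec{x}\|_2$, so a column of $\ell_2$-norm $\leq 1$ has $\ell_p$-norm $\leq 1$ and the bound is clean. For $1\leq p<2$ the crude comparison $\|\vec{x}\|_p\leq d^{1/p-1/2}\|\vec{x}\|_2$ would give a worse constant, so either the lemma is implicitly being used only for $p\geq 2$ (consistent with the ``essentially the same argument'' remark and the fact that the sharper $\ell_p$ machinery is needed precisely for large $p$), or one sharpens the column-norm estimate using that the relevant column entries are controlled. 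I would state the argument for general $p\geq 1$ getting $s_i \le (1+\gamma)\sqrt{d}\,\|[u]\vec e_{j^*}\|_p$ and then note $\|[u]\vec e_{j^*}\|_p\le 1$ for $p\ge 2$; this is a one-paragraph proof and there is no deeper difficulty lurking.
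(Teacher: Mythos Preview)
Your approach is exactly the paper's intended one: the paper omits the proof entirely, saying only that it uses ``essentially the same argument'' as Lemma~\ref{bss-weights-bound}, and you have reproduced that argument with the BSS two-sided bound replaced by the $\ell_p$ bound of Theorem~\ref{lp-weights-thm}. The pigeonhole on a unit row to find $|u_{i,j^*}|\geq 1/\sqrt{d}$, the single-term lower bound $|s_i u_{i,j^*}|^p$ on $\|[s]^\top[u]\vec e_{j^*}\|_p^p$, and the upper bound from the sparsifier guarantee are all identical in spirit to the $p=2$ case.

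Your extra care about $\|[u]\vec e_{j^*}\|_p$ versus $\|\vec e_{j^*}\|_p$ is in fact more than the paper provides. In the $p=2$ case these coincide because $[u]^\top[u]=I_t$, and for $p\geq 2$ your observation that a column of $\ell_2$-norm $1$ has $\ell_p$-norm at most $1$ closes the argument cleanly. You are right that for $1\leq p<2$ the stated constant $(1+\gamma)\sqrt{d}$ does not follow from this route without an extra $d^{1/p-1/2}$ factor; the paper does not address this, and in the downstream application (the proof of Theorem~\ref{mainthm}) only a polynomial bound on the weights is needed for the discretization, so the discrepancy is harmless there. Your plan to prove the clean bound for $p\geq 2$ and flag the weaker constant for $p<2$ is the honest way to write it up.
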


\section{The weighting algorithms for conditional and reference class regression}

The results from sparsification of linear systems tell us that we can estimate the loss on the subset of the data selected by the unknown condition by computing the loss on an appropriate ``sketch,'' a weighted average of the losses on a small subset of the data. The dimensions in Table~\ref{dim-table} give the size of these sparsified systems, i.e., the number of examples from the unknown subset we need to use to estimate the loss over the whole subset. The key point is that since the predictors are sparse, the dimension is small; and, since we are willing to accept a constant-factor approximation to the loss, a small number of points suffice. Therefore, it is feasible to enumerate these small tuples of points to obtain a list of candidate sets of points for use in the sketch. We also need to enumerate the weights for these points, but since we have also argued that the weights are bounded and we are (again) willing to tolerate a constant-factor approximation to the overall expression, there is also a small list of possible approximate weights for the points. The collection of points, together with the weights, gives a candidate for what might be an appropriate sketch for the empirical loss on the unknown subset. We can use each such candidate approximation for the loss to recover a candidate for the linear predictor. Thus, we obtain a list of candidate linear predictors that we can use to label our data as described above.
More precisely, our algorithm is as shown in Algorithm~\ref{walg}.
In the following, let $\Pi_{d_1,\ldots,d_s}$ denote the projection to
coordinates $d_1,d_2,\ldots,d_s$. 
\begin{algorithm} 
\DontPrintSemicolon
\SetKwInOut{Input}{input}\SetKwInOut{Output}{output}
\SetKwInOut{Subroutines}{subroutines}
\Input{Examples $(\vec{x}^{(1)},\vec{y}^{(1)},z^{(1)}),\ldots,(\vec{x}^{(m)}, \vec{y}^{(m)},z^{(m)})$, target loss bound $\epsilon$ and fraction $\mu$.}
\Output{A $k$-DNF over $x_1,\ldots,x_n$ and linear predictor over $y_1,\ldots, y_d$, or INFEASIBLE if none exist.}
\Subroutines{{\sf WtCond} takes as inputs examples $(\vec{x}^{(1)}, \ldots,\vec{x}^{(m)})$, nonnegative weights $(w^{(1)},\ldots,w^{(m)})$, and a bound $\mu$, and returns a $k$-DNF $\hat{c}$ over $x_1,\ldots,x_n$ solving the weighted conditional distribution search task.}

\Begin{
Let $m_0=\lceil\frac{1}{\mu}(\frac{b^{2p}}{\gamma^{2p}\epsilon^{2p}}(2pb+\sqrt{2\ln(12/\delta)})^2+\ln\frac{3}{\delta})\rceil$, $r$ is as given in Table~\ref{dim-table}.\\
\ForAll{$(d_1,\ldots,d_s)\in {[d]\choose s}$, 
$(q_1,\ldots,q_r)\in \{-\lceil \frac{1}{\gamma}(\ln r-\frac{1}{p}\ln\gamma)\rceil,
\ldots, 0,\ldots, \lceil \ln(s+1)/2\gamma \rceil\}$ and 
$(j_1,\ldots,j_r)\in {[m]\choose r}$}
{
  Let $\vec{a}$ be a solution to the following convex optimization problem:
  minimize $\sum_{\ell=1}^r((1+\gamma)^{q_\ell}
   (\langle \vec{a},\Pi_{d_1,\ldots,d_s}\vec{y}^{(j_\ell)}\rangle-z^{(j_\ell)})^p$ subject to $\|\vec{a}\|_p\leq b$.

  Put $c\gets$ the output of {\sf WtCond} on $(\vec{x}^{(1)},
\ldots,\vec{x}^{(m)})$ with the weights $w^{(i)}=|\langle\vec{a},\Pi_{d_1,\ldots,d_s}\vec{y}^{(i)}\rangle-z^{(i)}|^p$ and bound $\mu$.\\
  \lIf{{\sf WtCond} did not return INFEASIBLE and 
$\hat{\bbE}[(\langle\vec{a},\Pi_{d_1,\ldots,d_s}\vec{Y}\rangle-Z)^p|
c(\vec{X})]^{1/p}\leq\alpha\epsilon$}
  {
    \Return{$\vec{a}$ and $c$.}
  }
}
\Return{INFEASIBLE.}
}
\caption{Weighted Sparse Regression}\label{walg}
\end{algorithm}

\begin{theorem}[Conditional sparse $\ell_p$-norm regression]\label{mainthm}
For any constant $s$ and $\gamma>0$, $r$ as given in Table~\ref{dim-table} for 
$t=(s+1)$, and $m=\Theta\left(\frac{((1+\gamma)b)^3}{\mu\epsilon\eta^2}(n^k+s\log d+r\log\frac{m_0\log(\gamma^{1/p}s/r)}{\gamma}+\log\frac{1}{\delta})+m_0\right)$
examples,
Algorithm~\ref{walg} runs in polynomial time and solves the conditional 
$s$-sparse $\ell_p$ regression task with $\alpha=\tilde{O}((1+\gamma)\sqrt{n^k}
(\log b+\log\frac{1}{\eta}+\log\log\frac{1}{\delta})\epsilon)$.
\end{theorem}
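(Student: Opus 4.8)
The plan is to argue that among the tuples $(d_1,\ldots,d_s)$, $(q_1,\ldots,q_r)$, $(j_1,\ldots,j_r)$ enumerated by Algorithm~\ref{walg}, at least one "reconstructs" a near-optimal sparsified linear system for the unknown optimal condition $c^*$, so that solving the corresponding convex program yields a coefficient vector $\vec{a}$ whose loss conditioned on $c^*$ is within a constant factor of $\epsilon$; feeding the per-example losses $w^{(i)} = |\langle\vec{a},\Pi_{d_1,\ldots,d_s}\vec{y}^{(i)}\rangle - z^{(i)}|^p$ into {\sf WtCond} then produces a $k$-DNF $\hat c$ with $\Pr[\hat c]\geq(1-\eta)\mu$ and conditional loss within the Theorem~\ref{wcds-alg-thm} approximation factor of $\epsilon$, which is exactly the claimed $\alpha$. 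The final test in the loop only accepts $(\vec{a},c)$ when the empirical conditional loss is at most $\alpha\epsilon$, so any returned answer is valid up to sampling error; the work is in showing that a good candidate is \emph{found}.

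First I would set up the sparsification. Restrict to the subset $S^*$ of examples satisfying $c^*(\vec{x})$; on this subset consider the $|S^*|\times(s+1)$ matrix $[y]$ whose rows are $(\Pi_{d_1^*,\ldots,d_s^*}\vec{y}^{(i)}, z^{(i)})$ for the true support coordinates $d_1^*,\ldots,d_s^*$ of $\vec{a}^*$, so that $\|\langle\vec{a}^*,\Pi\vec{y}\rangle - z\|$ over $S^*$ is an $\ell_p$ norm of $[y]$ applied to a fixed vector. Put $[y]$ in an orthonormal basis $[u]$ (the QR/Lewis-weight normalization) and invoke Theorem~\ref{lp-weights-thm}: there exist $r = r(p,s+1,\gamma)$ rows and weights $s_1,\ldots,s_r$ giving a $(1\pm\gamma)$ approximation of the $\ell_p$ norm on the whole column space of $[y]$, and by Lemma~\ref{lp-weights-bound} the weights have magnitude at most $(1+\gamma)\sqrt{|S^*|}$. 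The enumeration over $(d_1,\ldots,d_s)$ hits the true support ($\binom{d}{s}$ choices, polynomial for constant $s$); the enumeration over $(j_1,\ldots,j_r)$ hits the $r$ sparsifier rows; and the geometric grid $(1+\gamma)^{q_\ell}$ with the stated range of exponents hits each weight $s_i^p$ (equivalently each $s_i$) up to a $(1\pm\gamma)$ multiplicative factor — here I would check that the lower endpoint $-\lceil\frac1\gamma(\ln r - \frac1p\ln\gamma)\rceil$ covers the smallest relevant weight (a weight too small to matter contributes negligibly and can be rounded to the grid's floor) and the upper endpoint $\lceil\ln(s+1)/2\gamma\rceil$ covers $(1+\gamma)\sqrt{|S^*|}$ after absorbing $|S^*|\le m$. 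For this tuple, the objective of the convex program is a $(1\pm O(\gamma))$ approximation, uniformly over $\|\vec a\|_p\le b$, of the true conditional $\ell_p$ loss raised to the $p$; since $\vec a^*$ is feasible and achieves (empirical) loss $\approx\epsilon$, the minimizer $\vec a$ achieves conditional loss $O((1+\gamma)\epsilon)$ on $S^*$.

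With $\vec a$ in hand whose loss on $c^*$ is $O((1+\gamma)\epsilon)$, the weights $w^{(i)}=|\langle\vec a,\Pi\vec y^{(i)}\rangle - z^{(i)}|^p$ are nonnegative and bounded by $(2(1+\gamma)b\cdot b)^p$ (Cauchy–Schwarz plus the norm bounds), and $\bbE[W\mid c^*]\le O((1+\gamma)\epsilon)^p$ with $\Pr[c^*]\ge(1+\eta)\mu$; so the hypotheses of Theorem~\ref{wcds-alg-thm} hold (with the loss scaled to $[0,1]$ to match the stated form), and {\sf WtCond} returns $\hat c$ with $\Pr[\hat c]\ge(1-\eta)\mu$ and $\bbE[W\mid\hat c] \le \tilde O(n^{k/2}(\log b + \log\frac1\eta + \log\log\frac1\delta))\cdot O((1+\gamma)\epsilon)^p$; taking the $p$-th root gives conditional loss $\tilde O((1+\gamma)\sqrt{n^k}(\log b+\log\frac1\eta+\log\log\frac1\delta))\,\epsilon$, matching $\alpha$. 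The remaining bookkeeping is the uniform-convergence / sample-complexity argument: $m_0$ fresh examples are reserved to guarantee that with probability $1-\delta/3$ the empirical $p$-th-power losses for \emph{every} candidate $\vec a$ in the (explicitly countable) list are within $\gamma$-relative error of their expectations — this is a Bernstein/Chernoff union bound over the $\binom{d}{s}\binom{m}{r}(\text{grid size})^r$ candidates, which is where the $s\log d + r\log\frac{m_0\log(\gamma^{1/p}s/r)}{\gamma}$ term in the sample size comes from — and the $\frac{((1+\gamma)b)^3}{\mu\epsilon\eta^2}(n^k+\cdots)$ term is inherited directly from the sample requirement of Theorem~\ref{wcds-alg-thm} applied with confidence $\delta/3$.

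\textbf{Main obstacle.} The delicate point is the discretization of the sparsifier weights: I must show that replacing each true weight $s_i^p$ by the nearest grid value $(1+\gamma)^{q_\ell}$ only perturbs the convex objective by a $(1\pm O(\gamma))$ factor \emph{uniformly over the feasible ball} $\|\vec a\|_p\le b$, so that the approximate minimizer is still near-optimal for the true objective; the subtlety is that rows with very small Lewis weight, if rounded up to the grid floor rather than to $0$, could in principle accumulate, so I need the lower endpoint of the exponent range to be small enough that the total mass of all such rounded-up rows is $O(\gamma)$ times the smallest possible objective value $\epsilon^p$ — this is precisely what the $\ln r - \frac1p\ln\gamma$ in the exponent bound is engineered to ensure, and verifying that constant is the one genuinely non-routine calculation. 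Everything else (existence of the sparsifier, boundedness of weights, the {\sf WtCond} guarantee, and the concentration union bound) is assembled from the cited lemmas.
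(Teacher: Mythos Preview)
Your outline is essentially the same argument the paper gives: establish the existence of an $\ell_p$-sparsifier for the data matrix restricted to $c^*$, argue that the triple loop hits the correct support, row indices, and (discretized) weights, conclude that the convex program's minimizer $\vec a$ has conditional loss $O((1+\gamma)\epsilon)$ on $c^*$, and then invoke Theorem~\ref{wcds-alg-thm} for {\sf WtCond}.

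Two points where the paper's execution differs from yours are worth flagging:

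\textbf{Uniform convergence.} You propose a Bernstein/Chernoff union bound over the finite candidate list, using ``$m_0$ fresh examples.'' The paper instead invokes the Rademacher bound (Theorem~\ref{rcgen}) to get uniform convergence over \emph{all} $\vec a$ with $\|\vec a\|_2\le b$ on the $c^*$-conditional distribution, using the first $m_0$ examples. Your version has a small circularity: the candidate $\vec a$'s are outputs of convex programs whose objective depends on the very data you want to average over (the sparsifier rows are drawn from $S^*$), so ``union bound over the list'' is not quite licit without either a genuine holdout split (which the algorithm does not make) or a covering argument. Rademacher sidesteps this entirely, and is what justifies the specific form of $m_0$ in the algorithm.

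\textbf{Weight discretization.} What you flag as the main obstacle the paper handles more directly: it first \emph{discards} any weight below $\gamma/r^{1/p}$ (their total contribution to $\|[s]^\top[u]\vec v\|_p^p$ is at most $\gamma^p$), and then rounds the survivors to the $(1+\gamma)$-grid, costing another $(1\pm\gamma)$ factor. So there is no ``rounding up to the grid floor'' to worry about; the lower endpoint of the exponent range is chosen precisely so that the minimum grid value equals this discard threshold. On the upper end, the paper claims the weight bound is $(1+\gamma)\sqrt{s+1}$ (via Lemmas~\ref{bss-weights-bound}/\ref{lp-weights-bound} with $t=s+1$), not $(1+\gamma)\sqrt{|S^*|}$ as you wrote---this is what makes the upper exponent $\lceil\ln(s+1)/2\gamma\rceil$ sufficient, and your reading would not match the algorithm's grid.
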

In our proof of this theorem, we will find it convenient to use the Rademacher generalization bounds for linear predictors (note that $x\mapsto |x|^p$ is $pb^{p-1}$-Lipschitz on $[-b,b]$):
\begin{theorem}[Bartlett and Mendelson~\citeyear{bm02}, Kakade et al.~\citeyear{kst09}]\label{rcgen}
For $b>0$, $p\geq 1$, random variables $(\vec{Y},Z)$ distributed over
$\{\vec{y}\in\bbR^d:\|y\|_2\leq b\}\times [b,b]$, and any $\delta\in
(0,1)$, let $L_p(\vec{a})$ denote $\bbE[|\langle\vec{a},\vec{Y}
\rangle-Z|^p]$, and for an an i.i.d.~sample of size $m$ let
$\hat{L}_p(\vec{a})$ be the empirical loss $\frac{1}{m}\sum_{j=1}^m
|\langle \vec{a},\vec{y}^{(j)}\rangle-z^{(j)}|^p.$
We then have that with probability $1-\delta$ for all
$\vec{a}$ with $\|a\|_2\leq b$,
\[
|L_p(\vec{a})- \hat{L}_p(\vec{a})|\leq\frac{2pb^{p+1}}{\sqrt{m}}+b^p\sqrt{\frac{2\ln(4/\delta)}{m}}.
\]
\end{theorem}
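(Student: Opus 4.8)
The plan is to recognize this as the standard Rademacher-complexity uniform convergence bound, specialized to the $\ell_p$ loss of norm-bounded linear predictors, and to reconstruct it through the usual symmetrization, contraction, and linear-class complexity computation; the stated constants will then fall out of the specific Lipschitz constant $pb^{p-1}$ and the $\ell_2$-ball Rademacher estimate. Let $\mathcal{A}=\{\vec{a}:\|\vec{a}\|_2\le b\}$ and let $\mathcal{L}=\{(\vec{y},z)\mapsto|\langle\vec{a},\vec{y}\rangle-z|^p:\vec{a}\in\mathcal{A}\}$ be the induced loss class. First I would bound the range of a single loss value: by Cauchy--Schwarz $|\langle\vec{a},\vec{y}\rangle|\le b^2$ and $|z|\le b$, so every loss value lies in $[0,B]$ for a bound $B$ that will control the concentration term.

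Second, I would apply the textbook symmetrization-plus-bounded-differences argument. Viewing $\sup_{\vec{a}}(L_p(\vec{a})-\hat{L}_p(\vec{a}))$ as a function of the $m$ i.i.d.\ examples, changing a single example perturbs $\hat{L}_p$ by at most $B/m$, so McDiarmid's inequality gives that this supremum exceeds its mean by a term of order $B\sqrt{\ln(1/\delta)/m}$ with probability $1-\delta$, while symmetrization bounds the mean by $2\,\mathcal{R}_m(\mathcal{L})$, where $\mathcal{R}_m$ denotes the empirical Rademacher complexity. Splitting the failure probability between the two directions of the absolute value and the concentration step accounts for the $\ln(4/\delta)$ appearing in the statement.

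Third, I would control $\mathcal{R}_m(\mathcal{L})$ in two moves. Since $u\mapsto|u|^p$ is $pb^{p-1}$-Lipschitz on the relevant bounded range (precisely the fact flagged just before the statement), the Ledoux--Talagrand contraction inequality gives $\mathcal{R}_m(\mathcal{L})\le pb^{p-1}\,\mathcal{R}_m(\{(\vec{y},z)\mapsto\langle\vec{a},\vec{y}\rangle-z:\vec{a}\in\mathcal{A}\})$. The additive $-z$ offset does not depend on $\vec{a}$, so it is constant inside the supremum and vanishes in the Rademacher expectation, leaving the complexity of the linear class $\{\vec{y}\mapsto\langle\vec{a},\vec{y}\rangle:\|\vec{a}\|_2\le b\}$ over data with $\|\vec{y}\|_2\le b$, which is at most $b^2/\sqrt{m}$ by the standard $\ell_2$-ball bound. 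Composing the two gives $2\,\mathcal{R}_m(\mathcal{L})\le 2pb^{p+1}/\sqrt{m}$, exactly the first term of the claimed inequality.

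Substituting $B$ into the concentration term then produces the second term. The one genuinely delicate point is matching the advertised constants: in particular, justifying that the loss range entering the McDiarmid increment may be taken as $b^p$ rather than the crude $(b^2+b)^p$ from Cauchy--Schwarz under the problem's normalization, and tracking the factor-of-two bookkeeping between the symmetrization and concentration steps that yields $\sqrt{2\ln(4/\delta)/m}$. Since the statement is attributed to Bartlett--Mendelson and Kakade et al., the substantive content is this pipeline, and the remaining effort is purely verifying that the constants reduce to the cited form.
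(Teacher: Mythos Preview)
The paper does not supply its own proof of this theorem: it is quoted as a known result from Bartlett--Mendelson and Kakade et al., with only the parenthetical hint that $x\mapsto |x|^p$ is $pb^{p-1}$-Lipschitz on $[-b,b]$. Your reconstruction via symmetrization, McDiarmid's inequality, Ledoux--Talagrand contraction with that Lipschitz constant, and the $\ell_2$-ball Rademacher bound $b^2/\sqrt{m}$ is exactly the standard argument underlying those references, and it lines up with the paper's one hint. So there is nothing to compare against, and your pipeline is the right one.

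Your caveat about the constants is well taken and is a genuine wrinkle in the statement as written, not in your argument. Under the hypotheses $\|\vec{a}\|_2\le b$, $\|\vec{Y}\|_2\le b$, $|Z|\le b$, the residual $\langle\vec{a},\vec{Y}\rangle-Z$ ranges over $[-(b^2+b),\,b^2+b]$, so the loss range entering McDiarmid is $(b^2+b)^p$ and the correct Lipschitz constant for $|u|^p$ on that interval is $p(b^2+b)^{p-1}$, not $pb^{p-1}$. The paper's parenthetical ``on $[-b,b]$'' and the displayed constants $2pb^{p+1}$ and $b^p$ evidently presume a normalization in which the residual itself lies in $[-b,b]$ (for instance $\|\vec{a}\|_2\le 1$, or $b\ge 1$ with a coarser bound absorbed into constants), which is not what the hypotheses literally say. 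This is a bookkeeping slip in the stated theorem rather than a flaw in your plan; your method yields the correct bound once the intended normalization is fixed.
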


Note that although this bound is stated in terms of the $\ell_2$ norm of the attribute and parameter vectors $\vec{y}$ and $\vec{a}$, we can obtain a bound in terms of the dimension $s$ of the sparse rule if we are given a bound $B$ on the magnitude of the entries: $b\leq \sqrt{s}B$.

\begin{proof-of}{Theorem~\ref{mainthm}}
Given that we are directly checking the empirical $\ell_p$ loss before returning
$\vec{a}$ and $c$, for the quoted number of examples $m$ it is immediate by a union bound over the iterations that
any $\vec{a}$ and $c$ we return are satisfactory with probability $1-\delta$.
All that needs to be shown is that the algorithm will find a pair that passes
this final check.

By Theorem~\ref{rcgen}, we note that it suffices to have
$\frac{b^{2p}}{\gamma^{2p}\epsilon^{2p}}(2pb+\sqrt{2\ln(12/\delta)})^2$
examples from the distribution conditioned on the unknown $k$-DNF event $c^*$ to obtain that the $\ell_p$ loss of each candidate for $\vec{a}$ is estimated to within an additive $\gamma\epsilon$ with probability $1-\delta/3$.
By Hoeffding's inequality, therefore when we draw $m_0$ examples, there is
a sufficiently large subset satisfying $c^*$ with probability $1-\delta/3$.

We let $[u]$ be an orthonormal basis for
$\vspan\{(\Pi_{d_1,\ldots,d_s}\vec{y}^{(j)},z^{(j)}):c^*(\vec{x}^{(j)})=1, j\leq m_0\}$
and invoke Lemma~\ref{bss-weights-lem} for $\ell_2$ or
Theorem~\ref{lp-weights-thm} for $p\neq 2$. In either case,
there is some set of weights $s_1,\ldots,s_{r_0}$ for a subset of $r_0$
coordinates $j_1,\ldots,j_{r_0}$ such that for any $\vec{v}$ in the column span of $[u]$,
$[s]^\top[u]\vec{v}$ has $\ell_p$ norm that is a $1\pm\gamma$-approximation to
the $\ell_p$ norm of $[u]\vec{v}$. In particular, for any $\vec{a}$, observing
$\vec{v}=[y]\vec{a}-\vec{z}$ is in the column span of $[u]$ by construction, we
obtain
\[
(1-\gamma)\|[y]\vec{a}-\vec{z}\|_p\leq \|[s]^\top([y]\vec{a}-\vec{z})\|_p\leq
(1+\gamma)\|[y]\vec{a}-\vec{z}\|_p.
\]
Now, we observe that we can discard weights (and dimensions) from $s_1,\ldots,
s_{r_0}$ of magnitude smaller than $\gamma/r_0^{1/p}$, since for any unit
vector $\vec{v}$, the contribution of such entries to $\|[s]^\top [u]
\vec{v}\|_p^p$ (recalling there are at most $r_0$ nonzero entries) is at most
$\gamma^p$. So we may assume the $r\leq r_0$ remaining weights all have
magnitude at least $\gamma/r^{1/p}$. Furthermore, if we round each weight to
the nearest power of $(1+\gamma)$, this only changes $\|[s]^\top [u]
\vec{v}|_p^p$ by an additional $(1\pm\gamma)$ factor. Finally, we note that
since $(\Pi_{d_1,\ldots,d_s}\vec{y}^{(j)},z^{(j)})$ has dimension $s+1$,
Lemmas~\ref{bss-weights-bound} and \ref{lp-weights-bound} guarantee that the
magnitude is also at most $(1+\gamma)\sqrt{s+1}$. Thus it indeed suffices
to find the powers $(q_1,\ldots,q_r)$ for our $r$ examples $j_1,\ldots,j_r$
such that $(1+\gamma)^{q_\ell}$ is within $(1+\gamma)$ of $s_\ell$, and the
resulting set of weights will approximate the $\ell_p$-norm of every $\vec{v}$
in the column span to within a $1+3\gamma$-factor.

Now, when the loop in Algorithm~\ref{walg} considers (i) the dimensions $d^*_1,
\ldots, d^*_s$  contained in the optimal
$s$-sparse regression rule $\vec{a}^*$
(ii) the set of examples $j^*_1,\ldots,j^*_r$ used for the sparse approximation for these coordinates and
(iii) the appropriate weights $(1+\gamma)^{q^*_1},\ldots,(1+\gamma)^{q^*_r}$,
the algorithm will obtain a vector $\vec{a}$ that achieves a
$(1+3\gamma)$-approximation to the empirical $\ell_p$-loss of $\vec{a}^*$ on the
same $s$ coordinates.

It then follows from Theorem~\ref{wcds-alg-thm} that with probability at least
$1-\delta/3$ over the data, {\sf WtCond} will in turn return to us
a $k$-DNF $c$ with probability $(1-\eta)\mu$ that selects a subset of the data
on which $\vec{a}$ achieves an $\alpha\epsilon=
\tilde{O}((1+\gamma)\sqrt{n^k}(\log b+\log 1/\eta+\log\log 1/\delta)\epsilon)$
approximation to the empirical $\ell_p$ loss of $\vec{a}^*$ on $c^*$. This
choice of $\vec{a}$ and $c$ passes the final check and is thus sufficient.
\end{proof-of}

By simply plugging in the algorithm from Theorem~\ref{scds-alg-thm} for {\sf WtCond}, 
we can obtain the following improvement when the desired $k$-DNF condition is small:
\begin{corollary}\label{slpr-cor}
For any constant $s$ and $\gamma>0$, $r$ as given in Table~\ref{dim-table} for 
$t=(s+1)$, and
$m = \Theta\left(\frac{(1+\gamma)bgk}{\mu\epsilon\eta^2}\left(\log\frac{n}{\delta}+s\log d+r\log\frac{m_0\log(\gamma^{1/p}s/r)}{\gamma}\right)
+m_0\right)$
examples,
if there is a $g$-term $k$-DNF solution to the conditional $s$-sparse 
$\ell_p$-regression task, then the modified Algorithm~\ref{walg} runs in polynomial time 
and solves the task with a $\tilde{O}(g\log m)$-term $k$-DNF with $\alpha=
\tilde{O}((1+\gamma)\epsilon g \log m)$.
\end{corollary}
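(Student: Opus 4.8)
The corollary is essentially a re-run of the proof of Theorem~\ref{mainthm} with one substitution: replace the {\sf WtCond} subroutine—previously instantiated via Theorem~\ref{wcds-alg-thm}—by the small-formula algorithm of Theorem~\ref{scds-alg-thm}. So my proof proposal is to isolate exactly the two places where the choice of {\sf WtCond} enters the argument, and check that each goes through with the new parameters. First, the enumeration/sparsification part of the argument is untouched: for the same values of $s$, $\gamma$, $r$, and $m_0$, the loop in Algorithm~\ref{walg} still reaches, for the correct guesses $(d_1^*,\dots,d_s^*)$, $(j_1^*,\dots,j_r^*)$, $(q_1^*,\dots,q_r^*)$, a coefficient vector $\vec{a}$ whose empirical $\ell_p$-loss on the selected $r$ points $(1+3\gamma)$-approximates that of $\vec{a}^*$ on the $m_0$ examples satisfying $c^*$—this is the content of the first two-thirds of the proof of Theorem~\ref{mainthm}, and it does not reference which conditional-distribution-search algorithm we plug in.

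\textbf{The substitution.} The next step is to feed this $\vec{a}$, with weights $w^{(i)}=|\langle\vec{a},\Pi_{d_1^*,\dots,d_s^*}\vec{y}^{(i)}\rangle-z^{(i)}|^p$ and bound $\mu$, to the Theorem~\ref{scds-alg-thm} algorithm instead. By hypothesis there is a $g$-term $k$-DNF solution to the conditional $s$-sparse $\ell_p$-regression task, i.e.\ a $g$-term $c^*$ with $\Pr[c^*]\geq(1+\eta)\mu$ on which $\vec{a}^*$ achieves loss $\epsilon$; the weights induced by $\vec{a}$ have conditional expectation on $c^*$ within a constant (from the $(1+3\gamma)$ sketch approximation plus the $\gamma\epsilon$ Rademacher estimation error, exactly as in Theorem~\ref{mainthm}) of $\epsilon^p$, so the premise of Theorem~\ref{scds-alg-thm} holds with loss parameter $O(\epsilon^p)$ and term-count $g$. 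Note the weights are nonnegative and bounded by $(2b)^p$ since $\|\vec{a}\|_p,\|\vec{a}^*\|_p\le b$ and $\|\vec{y}\|_2,|z|\le b$, so the rescaling-to-$[0,1]$ remark preceding Theorem~\ref{scds-alg-thm} applies. Hence {\sf WtCond} returns a $k$-DNF $c$ with $\tilde O(g\log m)$ terms, probability $\geq(1-\eta)\mu$, and weighted conditional loss $\tilde O(g\log m)\cdot O(\epsilon^p)$; taking $p$-th roots gives $\bbE[|\langle\vec{a},\Pi\vec{Y}\rangle-Z|^p\mid c(\vec{X})]^{1/p}=\tilde O((1+\gamma)\epsilon\, g\log m)$, which is $\alpha\epsilon$ for the stated $\alpha$, so the final empirical check in Algorithm~\ref{walg} passes.

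\textbf{Sample complexity.} The last step is to reconcile the sample bound. The $m_0$ term and the $r\log\frac{m_0\log(\gamma^{1/p}s/r)}{\gamma}+s\log d$ terms in the exponent come, as before, from the Rademacher bound on the $m_0$-subsample (unchanged) and from the union bound over loop iterations (the number of iterations is the same, being a product of $\binom{d}{s}$, the number of weight-exponent tuples, and $\binom{m}{r}$). The difference is the leading factor: Theorem~\ref{scds-alg-thm} needs $m=\tilde O\!\big(\frac{bgk}{\mu\epsilon\eta^2}\log\frac n\delta\big)$ examples rather than the $\Theta\!\big(\frac{b^3}{\mu\epsilon\eta^2}(n^k+\log\frac1\delta)\big)$ of Theorem~\ref{wcds-alg-thm}, which replaces the $n^k$ dependence by $gk\log n$ and drops two factors of $b$—this is precisely the quoted $m=\Theta\!\big(\frac{(1+\gamma)bgk}{\mu\epsilon\eta^2}(\log\frac n\delta+s\log d+r\log\frac{m_0\log(\gamma^{1/p}s/r)}{\gamma})+m_0\big)$. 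Polynomial running time is immediate since $s,\gamma$ (hence $r$) are constants, so the loop has $\poly(m,d)$ iterations and each calls a polynomial-time subroutine.

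\textbf{Main obstacle.} Nothing here is deep; the one point requiring a little care is the same one flagged just before Theorem~\ref{scds-alg-thm}—that theorem is stated for $0/1$ weights, so one must confirm that the reduction by rescaling bounded nonnegative losses into $[0,1]$ genuinely preserves both the $(1+\eta)\mu/(1-\eta)\mu$ probability guarantee and the multiplicative $\tilde O(g\log m)$ loss approximation, and that the factor $b$ reappearing in the sample bound is exactly the price of this rescaling. Everything else is a verbatim replay of the proof of Theorem~\ref{mainthm} with the cited parameters.
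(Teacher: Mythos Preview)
Your proposal is correct and matches the paper's approach exactly: the paper's entire ``proof'' of this corollary is the single sentence preceding it, namely that one simply plugs in the algorithm from Theorem~\ref{scds-alg-thm} for {\sf WtCond} in place of Theorem~\ref{wcds-alg-thm}. Your write-up is considerably more detailed than the paper's, carefully tracking where the substitution enters and how the sample-complexity terms change, but the underlying argument is identical.
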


Note that this guarantee is particularly strong in the case where the $k$-DNF would be 
small enough to be reasonably interpretable by a human user.

The extension to reference class $\ell_p$-norm regression proceeds by replacing the 
weighted condition search algorithm with a variant of the tolerant elimination algorithm 
from Juba~\citeyear{juba16}, given in Algorithm~\ref{refclassalg}.
\begin{algorithm}
\DontPrintSemicolon
\SetKwInOut{Input}{input}\SetKwInOut{Output}{output}
\Input{Examples $(\vec{x}^{(1)},w^{(1)}),\ldots,(\vec{x}^{(m)},w^{(m)})$, query point $\vec{x}^*$, minimum fraction $\mu_0$, minimum loss target $\epsilon_0$, approximation parameter $\eta$.}
\Output{A $k$-DNF over $x_1,\ldots,x_n$.}
\Begin{
  Initialize $\mu\gets 1$, $\hat{c}\gets\bot$, $\hat{\epsilon}\gets\max_jw^{(j)}$\\
  \While{$\mu\geq\mu_0$}{
    Initialize $\epsilon\gets\hat{\epsilon}$\\
    \While{$\epsilon\geq\epsilon_0/(1+\eta)$}{
      Initialize $c$ to be the empty disjunction\\
      \ForAll{Terms $T$ of at most $k$ literals}{
        \lIf{$\sum_{j:T(\vec{x}^{(j)})=1}w^{(j)}\leq \epsilon\mu m$}{
          Add $T$ to $c$.
        }
      }
      Put $\epsilon\gets\epsilon/(1+\eta)$
    }
    \lIf{$c(\vec{x}^*)=1$, $\sum_{j=1}^mc(\vec{x}^{(j)})\geq\mu m$, and $\epsilon<\hat{\epsilon}$}{
      Put $\hat{c}\gets c$, $\hat{\epsilon}\gets\epsilon$
    }
    Put $\mu\gets\mu/(1+\eta)$
  }
  \Return{$\hat{c}$}
}
\caption{Reference Class Search}\label{refclassalg}
\end{algorithm}

%

\begin{lemma}\label{classalg-lem}
If $m\geq\Omega(\frac{b^3}{\eta^2(\epsilon_0+\epsilon^*)\mu_0}(k\log n+\log\frac{1}{\eta\delta}+\log\log\frac{1}{\mu_0}+\log\log\frac{b}{\epsilon_0}))$ where $W\in [0,b]$, then Algorithm~\ref{refclassalg} returns a $k$-DNF $\hat{c}$ such that with probability $1-\delta$,
\begin{inparaenum}
\item $\hat{c}(\vec{x}^*)=1$
\item $\Pr[\hat{c}(\vec{X})]\geq\mu_0/(1+\eta)$
\item $\bbE[W|\hat{c}(\vec{X})]\leq O((1+\eta)^4n^k(\epsilon_0+\epsilon^*))$ where $\epsilon^*$ is the minimum $\bbE[W|c^*(\vec{X})]$ over $k$-DNF $c^*$ such that $c^*(\vec{x}^*)=1$ and $\Pr[c^*(\vec{X})]\geq (1+\eta)\mu_0$.
\end{inparaenum}
\end{lemma}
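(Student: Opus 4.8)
The plan is to analyze Algorithm~\ref{refclassalg} as a greedy ``tolerant elimination'' procedure in the spirit of Peleg~\citeyear{peleg07} and Juba~\citeyear{juba16}, combining three ingredients: (a) a purely combinatorial bound on the weight carried by a maximal low\hyp{}weight $k$\hyp{}DNF, which produces the $n^k$ factor; (b) an argument that at one point on the $(\mu,\epsilon)$ grid the algorithm is \emph{forced} to accept a class containing the optimal reference class $c^*$; and (c) a generalization argument that never invokes the VC dimension of $k$\hyp{}DNFs (which would cost $n^k$ rather than $k\log n$ samples). For (a): if $c=\bigvee\{T:\sum_{j:T(\vec x^{(j)})=1}w^{(j)}\le\theta\}$ is the disjunction of all empirically light terms and $\sum_j c(\vec x^{(j)})\ge\mu m$, then since each $j$ with $c(\vec x^{(j)})=1$ satisfies some term of $c$ and $w^{(j)}\ge 0$, subadditivity gives $\sum_{j:c(\vec x^{(j)})=1}w^{(j)}\le\sum_{T\in c}\sum_{j:T(\vec x^{(j)})=1}w^{(j)}\le(2n)^k\theta$, because there are at most $(2n)^k$ terms of at most $k$ literals. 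With $\theta=\epsilon\mu m$ this makes the empirical conditional average of $w$ on $\{j:c(\vec x^{(j)})=1\}$ at most $(2n)^k\epsilon$.

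Next I would set up uniform convergence. With the stated $m$, a multiplicative Chernoff bound, union\hyp{}bounded over the $\le(2n)^k$ terms, over the $O(\eta^{-2}\log\frac1{\mu_0}\log\frac b{\epsilon_0})$ pairs on the two geometric grids, and over $\delta$, shows that for every term $T$ and grid threshold, $\frac1m\sum_{j:T(\vec x^{(j)})=1}w^{(j)}$ agrees with $\bbE[W\mathbf{1}[T(\vec X)]]$ up to a $1\pm O(\eta)$ factor whenever that expectation is $\Omega((\epsilon_0+\epsilon^*)\mu_0)$, and is itself $O((\epsilon_0+\epsilon^*)\mu_0)$ otherwise; it also pins $\Pr[c^*]$ and $\bbE[W\mathbf 1[c^*(\vec X)]]$ to small multiplicative error on the sample. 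Now write $c^*=T_1\vee\cdots\vee T_g$, so $\bbE[W\mathbf 1[T_i(\vec X)]]\le\bbE[W\mathbf 1[c^*(\vec X)]]=\epsilon^*\Pr[c^*]$ for each $i$. I would pick the outer\hyp{}loop value $\mu$ with $\mu\le\Pr[c^*]/(1+\eta)$ and $\mu$ as large as possible; the hypothesis $\Pr[c^*]\ge(1+\eta)\mu_0$ provides exactly the slack that keeps this $\mu$ at least $\mu_0$ (so it is actually examined) and makes $\sum_j c^*(\vec x^{(j)})\ge\mu m$ hold with high probability. I would pair it with the largest inner\hyp{}loop value $\epsilon$ below $\Theta((1+\eta)(\epsilon_0+\epsilon^*))$. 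Then $\epsilon\mu m$ exceeds $\bbE[W\mathbf 1[T_i(\vec X)]]m=\epsilon^*\Pr[c^*]m\le\epsilon^*\mu(1+\eta)m$ by the chosen constant, so every $T_i$ is empirically light and the class built at $(\mu,\epsilon)$ contains $c^*$. Hence it contains $\vec x^*$ (since $c^*(\vec x^*)=1$) and covers at least $\sum_j c^*(\vec x^{(j)})\ge\mu m$ examples, so this iteration passes the algorithm's test; therefore the returned loss level is $O((1+\eta)(\epsilon_0+\epsilon^*))$ and the returned $\hat c$ contains $\vec x^*$ and has empirical count at least $\hat\mu m\ge\mu_0 m$.

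It remains to convert the empirical guarantees for the returned $\hat c$ into the population statements (ii) and (iii). Property (i) is immediate from the test. For the rest, since $\hat c$ is data\hyp{}dependent and can be a disjunction of as many as $(2n)^k$ terms, I would \emph{not} apply a Hoeffding bound to $\hat c$ directly; instead I would also union\hyp{}bound over the at most $(2n)^k+1$ \emph{population} threshold classes $P_\tau:=\bigvee\{T:\bbE[W\mathbf 1[T(\vec X)]]\le\tau\}$, and use the term\hyp{}level convergence to show $P_{\theta/2m}\subseteq\hat c\subseteq P_{2\theta/m}$ for the effective threshold $\theta$ at which $\hat c$ is built. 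Combining the Hoeffding estimates of $\Pr[P_\tau]$ and $\bbE[W\mathbf 1[P_\tau(\vec X)]]$ with the algorithm's empirical count check then lets one pin $\Pr[\hat c]$ to within a $1\pm O(\eta)$ factor of $\hat\mu\ge\mu_0$, giving (ii) with one further $(1+\eta)$ of slack; and dividing the bound $\bbE[W\mathbf 1[\hat c(\vec X)]]\le O(n^k\hat\epsilon\,\hat\mu)$ from ingredient (a) (transferred from the sample) by $\Pr[\hat c]=\Omega(\hat\mu)$ yields $\bbE[W\mid\hat c(\vec X)]\le O(n^k\hat\epsilon)=O((1+\eta)^4 n^k(\epsilon_0+\epsilon^*))$, where the $(1+\eta)^4$ absorbs the grid rounding, the weight/count estimation slack, and the $\Pr[c^*]$\hyp{}versus\hyp{}$\mu$ slack. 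The sample complexity is read off from the union bounds: $\log((2n)^k)=O(k\log n)$ from the terms and threshold classes, $\log\frac1\delta$, $\log\frac1\eta+\log\log\frac1{\mu_0}+\log\log\frac b{\epsilon_0}$ from the grid sizes, all multiplied by the $\frac{b^{O(1)}}{\eta^2(\epsilon_0+\epsilon^*)\mu_0}$ factor coming from the scale and range of the quantities being estimated.

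I expect the genuinely delicate step to be the last one: certifying $\Pr[\hat c]\ge\mu_0/(1+\eta)$ for the data\hyp{}dependent output while keeping the sample complexity at the $O(k\log n)$ level --- i.e., making the sandwiching argument against the population threshold classes $P_\tau$ tight enough on its lower\hyp{}bound side to be reconciled with the algorithm's explicit empirical count test. This is essentially the crux of why the tolerant\hyp{}elimination algorithm of Juba~\citeyear{juba16} generalizes at all, and some care with the $(1+\eta)$ slack (including whether the grid value landing near $\mu_0$ and near $\epsilon_0+\epsilon^*$ has enough room) will be needed there. By contrast, the subadditivity bound, the exhibition of the forcing grid point, and the arithmetic of the two geometric grids should be routine.
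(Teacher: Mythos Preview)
Your plan follows the same skeleton as the paper's proof: a term--level Chernoff union bound, the subadditivity estimate $\bbE[W\cdot c(\vec X)]\le\sum_{T\in c}\bbE[W\cdot T(\vec X)]\le O(n^k)\cdot(1+\eta)\mu\epsilon$, and the identification of the grid point $(\mu,\epsilon)$ at which every term of $c^*$ is light (via $\bbE[W\cdot T_i(\vec X)]\le\bbE[W\cdot c^*(\vec X)]=\epsilon^*\Pr[c^*(\vec X)]$). The paper does exactly these three things, in the same order, and combines them with the same arithmetic to get the $(1+\eta)^4 n^k$ factor.

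Where you differ is in part (c). The paper's own proof is terser here than your proposal: it simply asserts, from the per--term Chernoff bounds, that on every iteration only terms with $\bbE[W\cdot T(\vec X)]\le(1+\eta)\mu\epsilon$ survive, and that ``$\hat c$ is only updated if $c(\vec x^*)=1$ and $\Pr[c(\vec X)]\ge\mu/(1+\eta)$'' --- treating the passage from the empirical count check $\sum_j c(\vec x^{(j)})\ge\mu m$ to the population bound as immediate. It does \emph{not} introduce the population threshold classes $P_\tau$ or any sandwiching device; the generalization for the data--dependent $\hat c$ is handled at the level of the per--term concentration plus the count check, without further elaboration. So your proposal is in fact more cautious than the paper on precisely the point you flag as delicate. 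If you want to match the paper's argument you can drop the $P_\tau$ machinery and simply invoke the per--term Chernoff bounds at every grid point; if you want to be more careful than the paper, your sandwiching idea (noting that the $P_\tau$ form a chain indexed by at most $(2n)^k+1$ breakpoints) is a reasonable way to make that step rigorous while keeping the $k\log n$ sample dependence.
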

\begin{proof}
For convenience, let $N\leq (1+\log_{1+\eta}b/\epsilon_0)(1+\log_{1+\eta}1/\mu_0)$ denote the total number of iterations.
Consider first what happens when the loop considers the largest $\mu\leq \Pr[c^*(\vec{X})]/(1+\eta)$ and the smallest $\epsilon$ that is at least $(1+\eta)^2\epsilon^*$. On this iteration, for each term $T$ of $c^*$, we observe that $\bbE[W\cdot T(\vec{X})]\leq\epsilon^*\Pr[c^*(\vec{X})]$---indeed,
\[
\bbE[W\cdot T(\vec{X})]\leq \bbE[W\cdot c^*(\vec{X})]\leq \epsilon^*\Pr[c^*(\vec{X})].
\]
So, since $W$ is bounded by $b$, by a Chernoff bound, $\frac{1}{m}\sum_{j:T(x^{(j)})=1}w^{(j)}\leq (1+\eta)\epsilon^*\Pr[c^*(\vec{X})]$ with probability $1-\frac{\delta}{2{n\choose \leq k}N}$. Since this is in turn at most $\epsilon\mu$, $T$ will be included in $c$ on this iteration. But similarly, for $T$ not in $c$ with $\bbE[W\cdot T(\vec{X})]>(1+\eta)\mu\epsilon$, the Chernoff bound also yields that $\sum_{j:T(x^{(j)})=1}w^{(j)}\geq \epsilon\mu m$ with probability $1-\delta/2{n\choose \leq k}N$. By a union bound over all $T\in c^*$ and $T$ not in $c^*$ with such large error, we see that with probability at least $1-\delta/2N$, all of the terms of $c^*$ are included in $c$ and only terms with $\bbE[W\cdot T(\vec{X})]\leq(1+\eta)\mu\epsilon$ are included in $c$. So,
\begin{align*}
\bbE[W\cdot c(\vec{X})]&\leq\sum_{T\text{ in }c}\bbE[W\cdot T(\vec{X})]\\
&\leq O((1+\eta)n^k\mu\epsilon).
\end{align*}
Furthermore, by yet another application of a Chernoff bound, $c^*$ is true of at least $\mu m$ examples with probability at least $1-\delta/2N$. Thus, with probability $1-\delta/N$,
after this iteration $\hat{c}$ is set to some $k$-DNF and $\hat{\epsilon}\leq (1+\eta)^2\max\{\epsilon^*,\epsilon_0\}$.

Now, furthermore, on every iteration, we see more generally that with probability $1-\delta/N$, only terms with $\bbE[W\cdot T(\vec{X})]\leq(1+\eta)\mu\epsilon$ are included in $c$, and $\hat{c}$ is only updated if $c(\vec{x}^*)=1$ and $\Pr[c(\vec{X})]\geq \mu/(1+\eta)$, where $\mu\geq\mu_0$. Thus, for the $\hat{c}$ we return, since $\bbE[W\cdot c(\vec{X})]\leq O((1+\eta)n^k\mu\epsilon)$ and $\bbE[W|c(\vec{X})]=\bbE[W\cdot c(\vec{X})]/\Pr[c(\vec{X})]$, $\bbE[W|\hat{c}(\vec{X})]\leq O((1+\eta)^2n^k\hat{\epsilon})$. Thus, with probability $1-\delta$ overall, since we found above that $\hat{\epsilon}\leq (1+\eta)^2\max\{\epsilon^*,\epsilon_0\}$, we return a $k$-DNF $\hat{c}$ as claimed.
\end{proof}

Now, as noted above, our algorithm for reference class regression is obtained essentially by substituting Algorithm~\ref{refclassalg} for the subroutine {\sf WtCond} in Algorithm~\ref{walg}; the analysis, similarly, substitutes the guarantee of Lemma~\ref{classalg-lem} for Theorem~\ref{wcds-alg-thm}. In summary, we find:

\begin{theorem}[Reference class regression]\label{rcreg-thm}
For any constant $s$ and $\gamma>0$, $r$ as given in Table~\ref{dim-table} for 
$t=(s+1)$, and $m \geq m_0+\Omega\left(\frac{(1+\gamma)^3b^3}{\eta^2(\epsilon_0+\epsilon^*)\mu_0}\left(
r\log\frac{m_0\log(\gamma^{1/p}s/r)}{\gamma}+\log\left(\frac{n^kd^s}{\eta\delta}\log\frac{1}{\mu_0}\log\frac{(1+\gamma)b}{\epsilon_0}\right)
\right)\right)$
examples,
our modified algorithm runs in polynomial time and solves the reference class 
$s$-sparse $\ell_p$ regression task with $\alpha=O((1+\gamma)(1+\eta)^4n^k)$.
\end{theorem}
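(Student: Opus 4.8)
The plan is to run Algorithm~\ref{walg} essentially unchanged, replacing the subroutine {\sf WtCond} by Algorithm~\ref{refclassalg} (Reference Class Search), invoked with the query point $\vec{x}^*$, a density parameter $\mu_0=\Theta(\mu)$, minimum loss target $\epsilon_0^p$ (the $p$-th power, since the weights handed to the subroutine will be $p$-th powers of residuals), and approximation parameter $\eta$. The analysis then parallels the proof of Theorem~\ref{mainthm} step for step, with Lemma~\ref{classalg-lem} playing the role of Theorem~\ref{wcds-alg-thm}. As there, soundness is immediate: for the claimed $m$, a union bound over all loop iterations together with Theorem~\ref{rcgen} ensures that every candidate $\vec{a}$ considered has true conditional $\ell_p$ loss within the generalization slack of its empirical loss, so the final empirical check yields condition~(iii); conditions~(i) and~(ii), i.e.\ $\hat{c}(\vec{x}^*)=1$ and $\Pr[\hat{c}(\vec{X})]\geq(1-\eta)\mu$, come directly from the construction of Algorithm~\ref{refclassalg} and the (reparametrized) guarantee $\Pr[\hat{c}(\vec{X})]\geq\mu_0/(1+\eta)$ of Lemma~\ref{classalg-lem}. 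It remains to argue the algorithm finds some pair passing the final check.

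For completeness I would fix an optimal pair $(\vec{a}^*,c^*)$ for the reference class task ($\vec{a}^*$ $s$-sparse with $\|\vec{a}^*\|_2\leq b$, $c^*(\vec{x}^*)=1$, $\Pr[c^*(\vec{X})]\geq\mu$, loss $\epsilon^*$) and reuse the sparsification argument from the proof of Theorem~\ref{mainthm} verbatim: Hoeffding's inequality gives a sufficiently large $c^*$-subsample among the first $m_0$ examples; Theorem~\ref{rcgen} on that subsample estimates every candidate $\vec{a}$'s loss conditioned on $c^*$ to within an additive $\gamma\max\{\epsilon^*,\epsilon_0\}$; and Lemma~\ref{bss-weights-lem} (for $p=2$) or Theorem~\ref{lp-weights-thm} (for $p\neq 2$), after discarding weights below $\gamma/r_0^{1/p}$ and rounding the rest to powers of $(1+\gamma)$ (their magnitudes bounded by $(1+\gamma)\sqrt{s+1}$ via Lemmas~\ref{bss-weights-bound} and~\ref{lp-weights-bound}, hence lying in the enumerated grid of exponents), produces a set of $r$ weighted examples that $(1+3\gamma)$-approximates the $\ell_p$ norm of every residual vector $[y]\vec{a}-\vec{z}$ in the relevant column span. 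Thus on the loop iteration using $\vec{a}^*$'s coordinates $d^*_1,\ldots,d^*_s$, the corresponding sparsifier examples, and the matching exponents, the convex program returns some $\vec{a}$ whose conditional $\ell_p$ loss on $c^*$ is $O((1+\gamma)\max\{\epsilon^*,\epsilon_0\})$.

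The new ingredient is the final stage. Running Algorithm~\ref{refclassalg} on the weights $w^{(i)}=|\langle\vec{a},\Pi_{d^*_1,\ldots,d^*_s}\vec{y}^{(i)}\rangle-z^{(i)}|^p$, I would invoke Lemma~\ref{classalg-lem} with $c^*$ as the witness $k$-DNF: it satisfies $c^*(\vec{x}^*)=1$ and $\Pr[c^*(\vec{X})]\geq\mu\geq(1+\eta)\mu_0$, and $\bbE[w\mid c^*(\vec{X})]=O((1+\gamma)^p\max\{\epsilon^*,\epsilon_0\}^p)$, so the lemma (with its role of $b$ taken by the $O((1+\gamma)b)$ bound on the $w^{(i)}$ and its role of $\epsilon_0$ by $\epsilon_0^p$) returns $\hat{c}$ with $\hat{c}(\vec{x}^*)=1$, $\Pr[\hat{c}(\vec{X})]\geq\mu_0/(1+\eta)$, and $\bbE[w\mid\hat{c}(\vec{X})]=O((1+\eta)^4 n^k(\epsilon_0^p+\epsilon^*_w))$ where $\epsilon^*_w\leq\bbE[w\mid c^*(\vec{X})]$. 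Taking $p$-th roots and absorbing $((1+\eta)^4 n^k)^{1/p}\leq(1+\eta)^4 n^k$ gives conditional $\ell_p$ loss $O((1+\gamma)(1+\eta)^4 n^k\max\{\epsilon^*,\epsilon_0\})$, i.e.\ $\alpha=O((1+\gamma)(1+\eta)^4 n^k)$, and this pair passes the final check. The claimed $m$ then follows by collecting $m_0$, the generalization requirement, the union bound over the $\binom{d}{s}$ coordinate tuples, the $\binom{m}{r}$ example tuples and the weight grid of size $O\bigl((\tfrac{1}{\gamma}(\ln r-\tfrac{1}{p}\ln\gamma)+\tfrac{\ln(s+1)}{2\gamma})\bigr)^r$, and the sample requirement of Lemma~\ref{classalg-lem} with these substitutions.

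I expect the main obstacle to be bookkeeping rather than anything conceptual: correctly instantiating the hypotheses of Lemma~\ref{classalg-lem} for the data-dependent weights $w^{(i)}$ induced by the recovered $\vec{a}$ --- in particular pinning down the effective value bound $O((1+\gamma)b)$ and the effective ideal loss $\epsilon_0^p+\epsilon^*_w$ that must feed both the sample-complexity expression and the $\alpha$ bound --- and tracking the reparametrizations of $\mu_0$ and $\eta$ together with the $p$-th-root step so that the $\max\{\epsilon^*,\epsilon_0\}$ produced by the completeness argument matches the $\max\{\epsilon^*,\epsilon_0\}$ demanded in condition~(iii). There is one further minor point, already present in Theorem~\ref{mainthm}, to confirm along the way: that the convex program's constraint $\|\vec{a}\|_p\leq b$ delivers $\|\hat{\vec{a}}\|_2\leq b$, up to a factor depending only on the constant $s$.
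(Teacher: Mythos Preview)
Your proposal is correct and follows exactly the approach the paper takes: the paper's ``proof'' of Theorem~\ref{rcreg-thm} is a single sentence noting that one substitutes Algorithm~\ref{refclassalg} for {\sf WtCond} and Lemma~\ref{classalg-lem} for Theorem~\ref{wcds-alg-thm} in the argument for Theorem~\ref{mainthm}. You have in fact supplied considerably more detail than the paper does (the $p$-th-power bookkeeping, the witness role of $c^*$ in Lemma~\ref{classalg-lem}, the union bound accounting), all of it consistent with what the substitution entails.
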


\section{Experimental evaluation}

To evaluate our algorithm's performance in practice, we two kinds of experiments: one using synthetic data with a planted solution, and another using some standard benchmark data sets from the LIBSVM repository~\citep{chang11}.

\paragraph{Synthetic data.} To generate the synthetic data sets, we first chose a random 2-DNF over Boolean attributes by sampling terms uniformly at random. We also fixed two out of $d$ coordinates at random and sampled parameters $\vec{a}$ from a mean $0$, variance $\sigma^2$ Gaussian for our target regression fit. 
For the actual data, we then sampled Boolean attributes $\vec{X}$ that, with 25\% 
probability are a uniformly random satisfying assignment to the DNF and with 75\% 
probability are a uniformly random falsifying assignment. We sampled $\vec{Y}$ from a 
standard Gaussian in $\bbR^{d}$. Finally, for the examples where $\vec{X}$ was a 
satisfying assignment, we let $Z$ be given by the linear rule $\langle 
\vec{a},\vec{Y}\rangle+ \nu$ where $\nu$ is a mean $0$, variance $\sigma^2$ Gaussian; 
otherwise, if $\vec{X}$ was a falsifying assignment, $Z$ was set to $\nu^*$ where $\nu^*$ is a standard  mean $0$, variance $1$ Gaussian. (Parameters in Table \ref{table:dataset-comparison}.) Thus, conditioned on the planted formula, there is a small-error regression fit $\vec{a}$---the expected error of $\vec{a}$ is $|\nu|$. Off of this 
planted formula, the expected error for the optimal prediction $z=0$ is $|\nu^*|=1$. 

\begin{table}[t]
  \begin{adjustwidth}{-.5in}{-.5in}
    \caption{Synthetic Data Set Properties}
    \label{table:dataset-comparison}
    \centering
    \begin{tabular}{ccccc}
      \toprule
      Size & Real Attributes ($d$) & Boolean Attributes ($\vec{X}$) & $k$-DNF Terms  &  Variance $\sigma^2$ \\
      \midrule
      1000    & 6   & 10         & 4                & 0.01   \\
      5000    & 10  & 50         & 16              & 0.1    \\
      \bottomrule
    \end{tabular}
  \end{adjustwidth}
\end{table}

For these simple data sets, we made several modifications to simplify and accelerate the algorithm. First, we simply fixed all of the weights (of the form $(1+\gamma)^{q_i}$) to $1$, i.e., $q_i=0$ for all $i$, since the leverage scores for Gaussian data (and indeed, most natural data sets) are usually small. Second, we set $m_0=200$ for the 1000-example data set and $m_0=500$ for the 5000-example data set. Third, we used the version of the algorithm for small DNFs considered in Corollary~\ref{slpr-cor}, that uses algorithm of Juba et al.~\citeyear{jlm18} (recalled in Theorem~\ref{scds-alg-thm}) for {\sf WtCond}. 

With these modifications, we ran Algorithm~\ref{walg} for $\ell_2$ regression with $s=2$, $\gamma=1$, and $\mu=0.22$ on the 1000-example data set.
The algorithm was able to identify the planted $2$-DNF together with the two components used in the regression rule in this case, and thus also we approximately identify the linear predictor. The mean squared error of the approximate predictor returned by our algorithm was $0.0149$ on the planted $2$-DNF (c.f.~the ideal linear fit has variance-$0.01$ Gaussian noise added to it). However, we note that once the planted $2$-DNF has been identified, a good regression fit can be found by any number of methods. We also ran Algorithm~\ref{walg} for $\ell_2$ regression with $s=2$, $\gamma=1$, and $\mu=0.2465$ on the 5000-example data set. The algorithm was able to identify a subset of $11$ out of the $16$ planted $k$-DNF terms, picking up about $98\%$ of the planted $k$-DNF data. The mean squared error of the approximate predictor was $0.4221$.

As a baseline, we also used the algorithm from Juba~\citeyear{juba17} for conditional sparse sup-norm regression on our synthetic data. In order to obtain reasonable performance, we needed to make some modifications: we modified the algorithm along the lines of Algorithm~\ref{walg}, to only search over tuples of a random $m_0=200$ examples for the 1000-example data set and $m_0=500$ for the 5000-example data set in producing its candidate regression parameters. We also chose to take the parameters that achieved the smallest residuals under the sup norm, among those that obtained a condition satisfied by the desired fraction of the data. Again, we used $s=2$, and for the norm bound we used a threshold of $\epsilon = 0.24$ for the 1000-example data set and $1.1$ for the 5000-example data set. (We considered a range of possible thresholds; when $\epsilon\approx 1.5$ on the $\sigma^2=0.1$ data, the sup norm algorithm starts adding terms outside the planted DNF, significantly increasing its error.) For the 1000-example data set, this algorithm was able to identify the planted $2$-DNF. However, for the 5000-example data set, it only found 10 out 16 terms of the planted $2$-DNF, picking up about $96\%$ of the planted $k$-DNF data. 

\paragraph{Real-world benchmarks.}
We also compared Algorithm~\ref{walg} against \textit{selective regressors}~\citep{eyw12} on some of the LIBSVM \citep{chang11} regression data sets that were used to evaluate that work. We split each of those datasets into a training set (1/3 of the data) and test set (2/3 of the data). We generated Boolean attributes by choosing different binary splits (median, quartile) on the numerical features, excluding the target attribute. We then run our algorithm \ref{walg} on the training set to obtain a $2$-DNF. Next, we filtered both the training and test data satisfying the $2$-DNF, using the first one to train a new regression fit on the selected subset and the other as a holdout to estimate the error of this resulting regression fit. We compared this to the test error for selective regression. Risk-Coverage (RC) curves of the results are shown in Figure \ref{fig:RCCurves}. We outperform the baseline on the Boston housing dataset, and we generally achieved lower error than the baseline for lower coverage (less than 0.5) and somewhat higher error for higher coverage for the other three data sets (Body fat, Space, and Cpusmall). Note that in addition, we obtained a $2$-DNF that describes the subpopulation on all four, which is the main advantage of our method. Since selective regression first tries to fit the entire data set, and then chooses a subset where that fixed predictor does well, it is to be expected that it may miss a small subset where a different predictor can do better, but that its freedom to abstain on a somewhat arbitrary subset may give it an advantage at high coverage.

\begin{figure}[ht]
\begin{center}
\begin{subfigure}{.495\linewidth}
  \centering
  \includegraphics[width=1\linewidth]{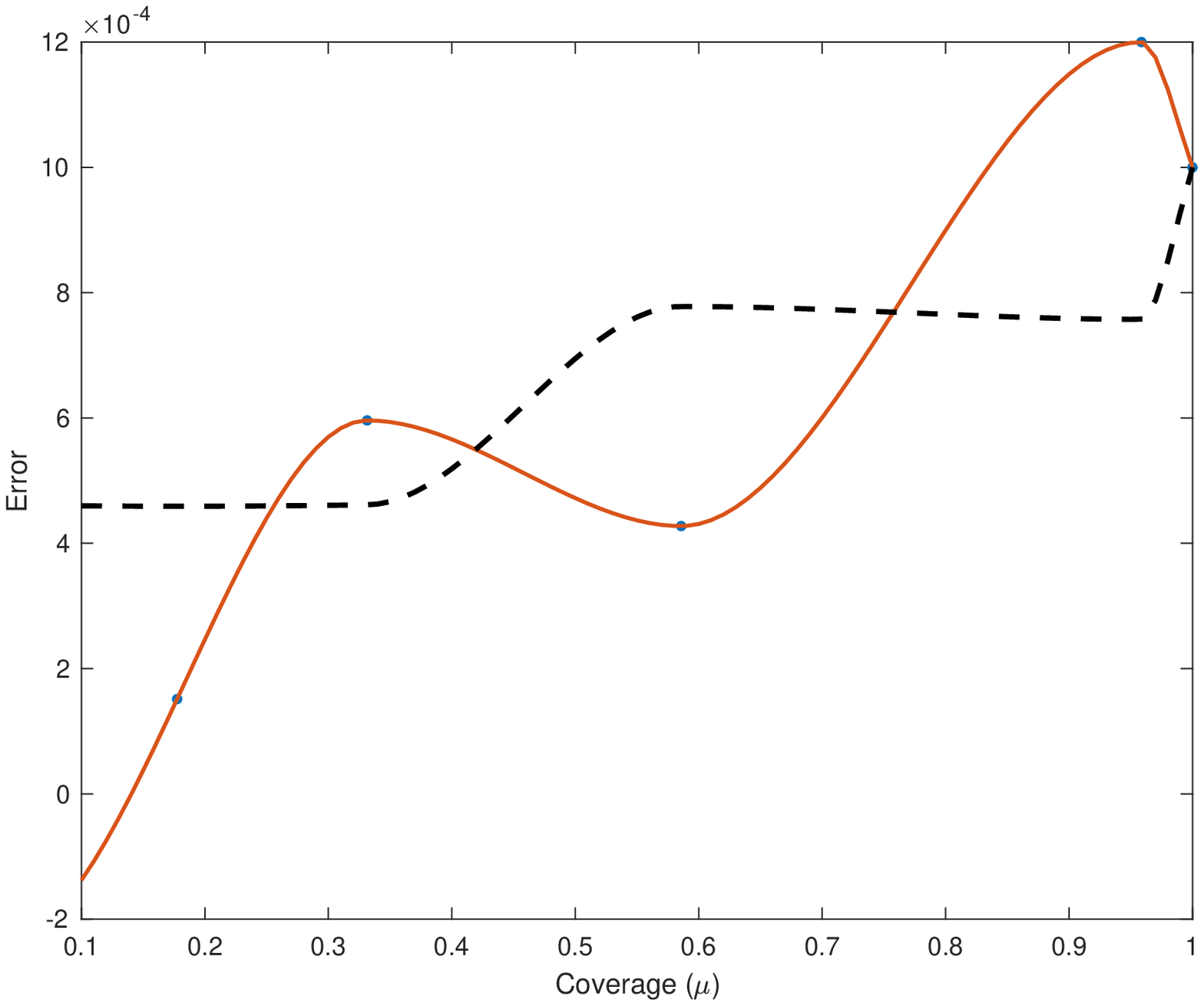}
  \caption{Body fat}
  \label{fig:bodyfat}
\end{subfigure}%
\begin{subfigure}{.495\linewidth}
  \centering
  \includegraphics[width=1\linewidth]{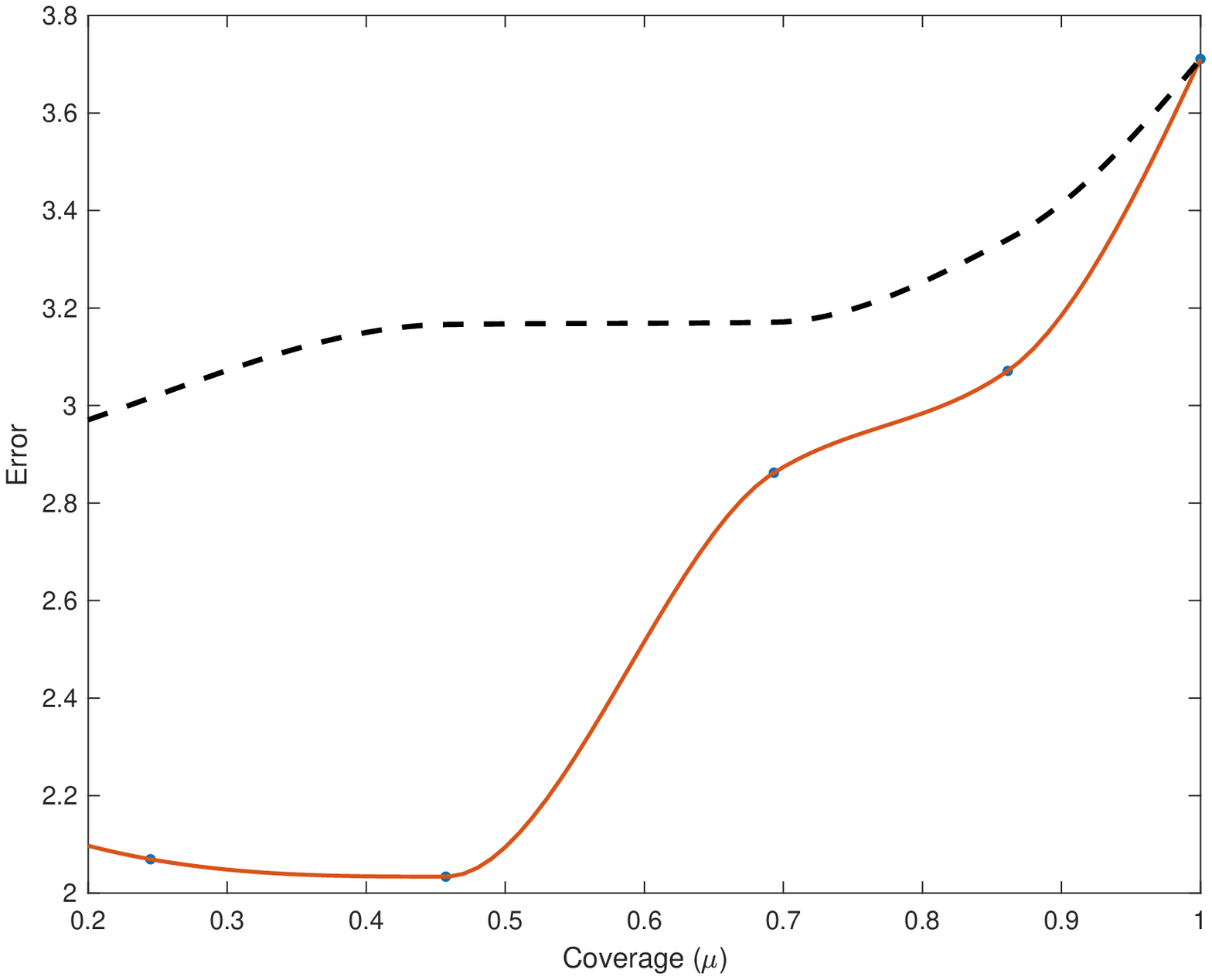}
  \caption{Boston housing}
  \label{fig:housing}
\end{subfigure}
\begin{subfigure}{.495\linewidth}
  \centering
  \includegraphics[width=1\linewidth]{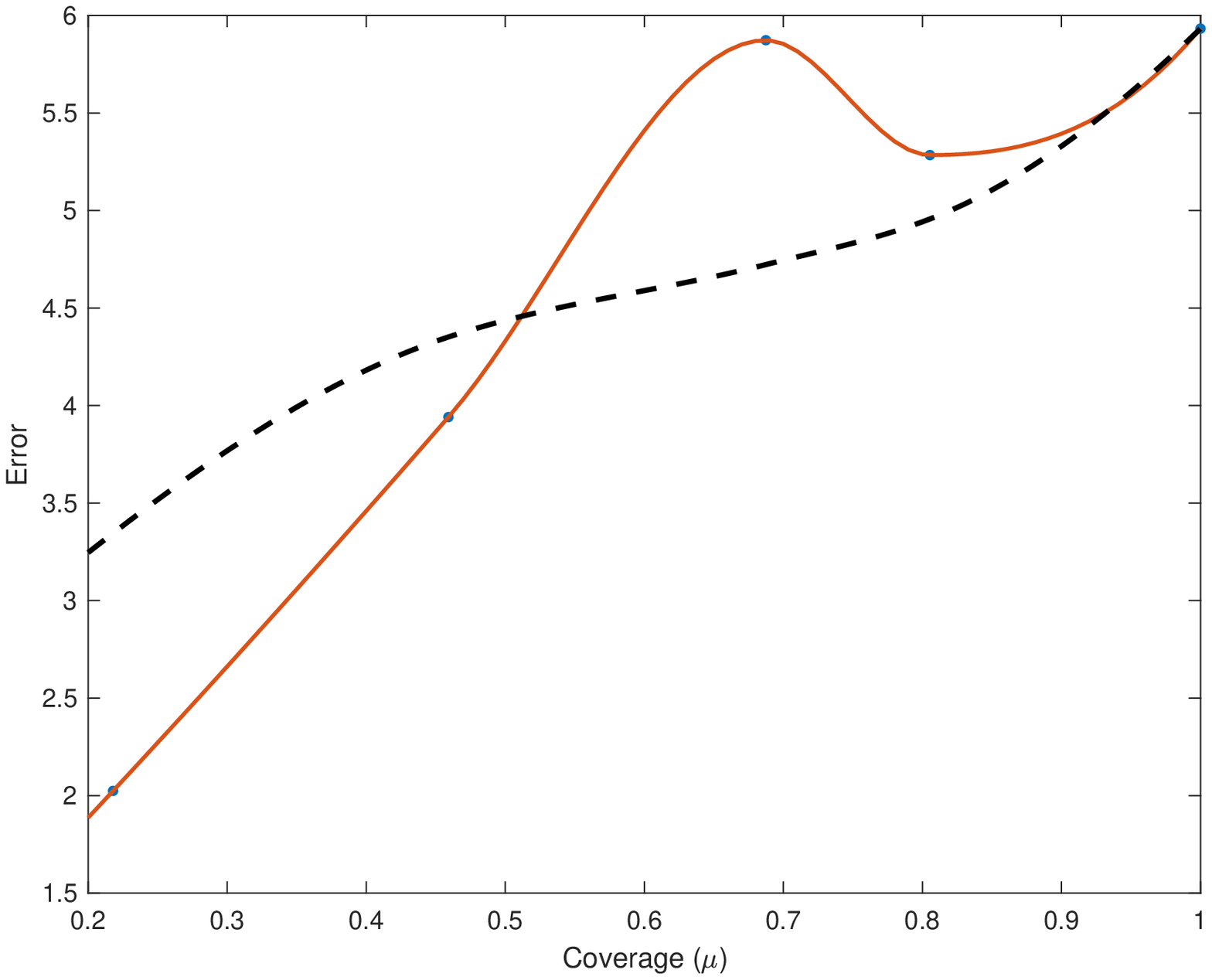}
  \caption{Cpusmall}
  \label{fig:cpus}
\end{subfigure}
\begin{subfigure}{.495\linewidth}
  \centering
  \includegraphics[width=1\linewidth]{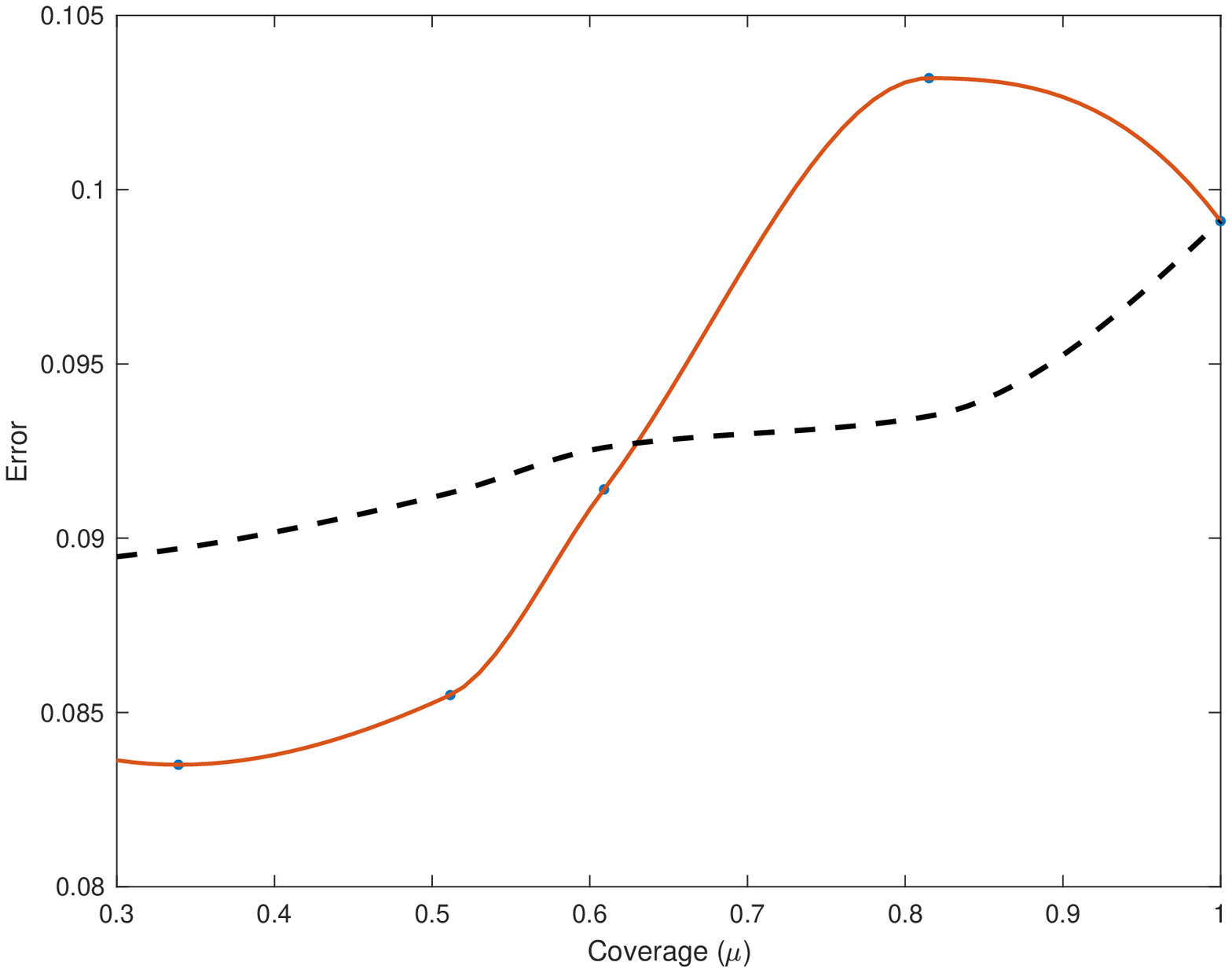}
  \caption{Space}
  \label{fig:space}
\end{subfigure}
\end{center}
\caption{RC curves of Algorithm~\ref{walg} (red line) and the baseline method (dashed line) on different regression data sets from the LIBSVM repository. The horizontal axis represents the coverage ($\mu$), while the vertical axis represents the error. }
\label{fig:RCCurves}
\end{figure}

\section{Directions for future work}
There are several natural open problems. First, although sparsity is 
desirable, our exponential dependence of the running time (or list size) 
on the sparsity is problematic. (The sup norm regression algorithm \citep{juba17} also suffered this deficiency.) Is it possible to avoid this? 
Second, our algorithm for reference class regression has an $O(n^k)$ blow-up of the loss, as compared to $\tilde{O}(n^{k/2})$ for conditional regression. Can we achieve a similar approximation factor for reference class regression?
Finally, we still do not know how close to optimal  this blow-up of the loss is; in particular, we do not have any lower bounds. 
Note that this is a computational and not a statistical issue, since we can obtain uniform convergence over all $k$-DNF conditions.

\bibliographystyle{plainnat}
\bibliography{robust}

\end{document}